\renewcommand\footnotetextcopyrightpermission[1]{}
\newtheoremstyle{def}
  {\topsep}
  {\topsep}
  {\normalfont}
  {2pt}
  {\bfseries\MakeUppercase}
  {.}
  {.5em}
  {}
\theoremstyle{definition}
\newtheorem{definition}{Definition}
\newcommand{\sysname}{\textrm{FIRE}\xspace}
\newcommand{\para}[1]{\noindent\textbf{#1}}
\newtheorem{lemma}{Lemma}[section] 
\begin{document}

\title{A Unified Frequency Domain Decomposition Framework for Interpretable and Robust Time Series Forecasting}

\author{Cheng He}
\email{cheng.he@mail.ustc.edu.cn}
\affiliation{%
  \institution{University of Science and Technology of China}
  \city{Hefei}
  \country{China}
}

\author{Xijie Liang}
\email{lxjie@blackwingasset.com}
\affiliation{%
  \institution{Shanghai Black Wing Asset Management Co., Ltd.}
  \city{Shanghai}
  \country{China}}

\author{Zengrong Zheng}
\email{lxjie@blackwingasset.com}
\affiliation{%
 \institution{Di-Matrix(Shanghai) Information Technology Co., Ltd.}
 \city{Shanghai}
 \country{China}}

\author{Patrick P.C. Lee}
\email{pclee@cse.cuhk.edu.hk}
\affiliation{%
  \institution{The Chinese University of Hong Kong}
  \city{Hong Kong}
  \country{China}}

\author{Xu Huang}
\email{xuhuangcs@mail.ustc.edu.cn}
\affiliation{%
  \institution{University of Science and Technology of China}
  \city{Hefei}
  \country{China}}

\author{Zhaoyi Li}
\email{lizhaoyi777@mail.ustc.edu.cn}
\affiliation{%
  \institution{University of Science and Technology of China}
  \city{Hefei}
  \country{China}}

\author{Hong Xie}
\email{xiehong2018@foxmail.com}
\affiliation{%
  \institution{University of Science and Technology of China}
  \city{Hefei}
  \country{China}}

\author{Defu Lian}
\email{liandefu@ustc.edu.cn}
\affiliation{%
  \institution{University of Science and Technology of China}
  \city{Hefei}
  \country{China}}

\author{Enhong Chen}
\email{cheneh@ustc.edu.cn}
\affiliation{%
  \institution{University of Science and Technology of China}
  \city{Hefei}
  \country{China}
}

\begin{abstract}
Current approaches for time series forecasting, whether in the time or frequency domain, predominantly use deep learning models based on linear layers or transformers. They often encode time series data in a black-box manner and rely on trial-and-error optimization solely based on forecasting performance, leading to limited interpretability and theoretical understanding. Furthermore, the dynamics in data distribution over time and frequency domains pose a critical challenge to accurate forecasting.
We propose \textbf{\sysname}, a unified frequency domain decomposition framework that provides a mathematical abstraction for diverse types of time series, so as to achieve interpretable and robust time series forecasting. 
\sysname introduces several key innovations: (i) independent modeling of amplitude and phase components, (ii) adaptive learning of weights of frequency basis components, (iii) a targeted loss function, and (iv) a novel training paradigm for sparse data. 
Extensive experiments demonstrate that \sysname consistently outperforms state-of-the-art models on long-term forecasting benchmarks, achieving superior predictive performance and significantly enhancing interpretability of time series representations.
\end{abstract}

\maketitle

\begin{sloppypar}
\section{Introduction}
\label{sec:intro}

Time series forecasting is a critical yet challenging task in various domains such as web mining, predictive maintenance of IoT devices, traffic prediction, weather forecasting, electricity load management, and financial analysis. Recently, the attention mechanism \cite{Vaswani17} has proven highly effective, establishing transformer-based architectures as the dominant approach for time series representation learning in the temporal domain \cite{Zhou21, Wu21, Li22a, Wang24, Xu22, Song23, Yang23, Du23}.  These models outperform traditional recurrent neural networks (RNNs) and convolutional neural networks (CNNs) \cite{Flunkert17, Lai18, Bai18, Xue19, Cheng20, Durairaj22, Aiwansedo24}, particularly effective in capturing long-range dependencies. However, time series data, composed of temporally ordered scalar sequences, often fail to capture complex underlying patterns when analyzed solely in the temporal domain.

To effectively capture the complex patterns of time series data, recent research has explored frequency-domain representations for time series data using Fast Fourier Transform (FFT). Notable approaches such as FredFormer \cite{Piao24}, FreTS \cite{Yi23}, and FITS \cite{Xu24} leverage frequency-domain techniques, including channel-wise attention, frequency-temporal dependency modeling, and complex-valued interpolation, while WPMixer \cite{Murad25} employs wavelet decomposition combined with multiplayer perceptrons (MLPs) for long-term forecasting.  Hybrid architectures, such as CDX-Net \cite{Li22b}, FEDformer \cite{Zhou22}, and TimeMixer++ \cite{Wangsy24}, integrate temporal and frequency-domain features to enhance the robustness and accuracy of time series representations. Despite these advances, most existing models encode time series representations empirically in a black-box manner, optimized through trial-and-error based on forecasting outcomes. This limits both interpretability and theoretical insights into the underlying data structure.

Time series forecasting is further complicated by concept drift \cite{Tsymbal04, Gama14, Boulegane22, Gunasekara24}, where the statistical properties and patterns of time series data shift over time. Such dynamics also imply \emph{basis evolution} in the frequency domain when time series data is decomposed into frequency components via FFT \cite{Masud10, Gurjar15, Haque16, zhou24}, where new frequency bases appear and existing ones disappear. Basis evolution complicates frequency-domain analysis, as models built on static basis assumptions cannot readily maintain stable and accurate representations.  Consequently, models trained on historical data become less effective for future predictions under concept drift and basis evolution. However, current state-of-the-art forecasting models either overlook these phenomena or address them only implicitly, leaving a void in interpretable and robust solutions.


We propose \textbf{\sysname}, a novel unified \underline{f}requency domain decomposition framework for \underline{i}nterpretable and \underline{r}obust time s\underline{e}ries forecasting. It provides a consistent mathematical abstraction for diverse time series data under concept drift and basis evolution, thereby enabling interpretable frequency-domain representations. It incorporates several key features: (i) modeling amplitude and phase components independently to capture underlying temporal dynamics in concept drift, (ii) learning adaptively the weights of frequency basis components across data patches to track the evolving importance of frequency bases, (iii) a targeted loss function that explicitly accounts for basis evolution, and (iv) a novel training paradigm that integrates Huber loss with a hybrid strong and weak convergence framework to accelerate training and improve generalization, particularly when large-scale, high-quality open datasets are limited. Our main contributions are summarized as follows:
\begin{itemize}[leftmargin=*]
    \item We propose \sysname, a unified frequency domain decomposition framework that provides analytical modeling for diverse types of time series. \sysname incorporates several key techniques to achieve interpretability and robustness for time series forecasting.
    \item Extensive experiments demonstrate that \sysname consistently outperforms state-of-the-art baselines across various long-term forecasting tasks, delivering cost-effective and interpretable solutions suitable for industrial applications.
\end{itemize}

\section{Preliminaries}
\label{sec:preliminary}

We introduce the analytical formulation of time series data, key concepts of concept drift and basis evolution, and the notations and metrics used in this paper.

\subsection{Analytical Formulation of Time Series Data}
\label{subsec:formulation}

In mathematics and engineering, complex data can be represented as an infinite series of basis vectors. The Fourier series, a widely used set of basis vectors, effectively represents time series data that satisfy specific conditions. Specifically, if a time series $x(t)$ is a periodic signal with period $T$ and satisfies the Dirichlet conditions (i.e., absolutely integrable over one period, with a finite number of discontinuities of the first kind and extrema), then the signal can be accurately represented by a Fourier series. 

We define $X[k]$ as a discrete Fourier transform (DFT) of $x(n)$ as: 
\begin{align} 
X[k] = \sum_{n=0}^{N-1} x[n] \cdot e^{-j\frac{2\pi}{N}kn}, 
\label{eq:DFT}
\end{align}
where $n$ is the time index in the temporal domain, $k$ is the frequency index in the frequency domain, both ranging from $0$ to $N-1$, and $j$ is the imaginary unit satisfying $j^2 = -1$. Using Euler's formula, we can express the exponential term in trigonometric form as: 
\begin{align} 
e^{-j\frac{2\pi}{N}kn} = \cos\left(\frac{2\pi}{N}kn\right) - j\sin\left(\frac{2\pi}{N}kn\right)
\label{eq:Euler expand}
\end{align}

Substituting Equation~\eqref{eq:Euler expand} into Equation~\eqref{eq:DFT}, we can obtain the real part $a[k]$ and the imaginary part $b[k]$ as:
\begin{equation}
\begin{aligned} 
&X[k] = \sum_{n=0}^{N-1} x[n] \cdot \left[\cos\left(\frac{2\pi}{N}kn\right) - j\sin\left(\frac{2\pi}{N}kn\right)\right]
\\
&a[k] = \sum_{n=0}^{N-1} x[n] \cdot \cos\left(\frac{2\pi}{N}kn\right)
\\
&b[k] = -\sum_{n=0}^{N-1} x[n] \cdot \sin\left(\frac{2\pi}{N}kn\right)
\\
&X[k] = a[k] + j \cdot b[k]
\label{eq:Euler DFT}
\end{aligned}
\end{equation}

We can derive the amplitude $A[k]$ and phase $\phi[k]$ in the frequency domain as:
\begin{equation}
\begin{aligned} 
&A[k] = \sqrt{a[k]^2 + b[k]^2}\\
&\phi[k] = \arctan\left(\frac{b[k]}{a[k]}\right).
\label{eq:amp&phi}
\end{aligned}
\end{equation}

The inverse DFT reconstructs the time series $x(t)$ as:
\begin{align} 
x[n] = a_0 + \sum_{k=1}^{N-1} \beta_k A[k] \cdot \cos\left(\frac{2\pi}{N}kn - \phi[k]\right),
\label{eq:reformulation}
\end{align}
where $a_0 = \frac{A[0]}{N}$ is the DC component (intercept) corresponding to $k=0$ in the frequency domain, and $\beta_k$ is the weight of the $k$-th basis component. Although the Fourier series is strictly defined for periodic signals, non-periodic signals can be approximated by assuming the sequence period matches the number of time points. Thus, we can have a uniform representation for $x(t)$ from Equation~\eqref{eq:reformulation}.

\subsection{Concept Drift in Time Domain}
\label{sec:data_drift}

Most machine learning algorithms assume stationary statistical distributions between training and testing phases. However, in practical time series applications, the underlying data distribution often evolves, leading to distinct patterns in future data compared to historical data. This phenomenon, termed \emph{concept drift}, refers to temporal changes in statistical properties \cite{Tsymbal04}. In data stream mining, methods like ADWIN \cite{Bifet07} can be used to detect change points and identify shifts in data concepts over time. 


\begin{definition}[\textbf{Degree of concept drift}]
Let $N_{\mathrm{change}}$ be the number of detected change points and $N_{\mathrm{total}}$ be the total number of time points in a dataset. The degree of concept drift $D_{\mathrm{drift}}$ is defined as:
\begin{equation}
    D_{\mathrm{drift}} = \frac{N_{\mathrm{change}}}{N_{\mathrm{total}}}
\end{equation}
\end{definition}

A higher $D_{\mathrm{drift}}$ indicates more frequent changes in the data distribution, and hence greater concept drift.

\subsection{Basis Evolution in Frequency Domain}
\label{sec:basis_evolution}

Concept evolution traditionally refers to the emergence of new classes or concepts in data streams \cite{Masud10}, and can be extended to changes in the underlying basis functions in frequency-domain time series analysis. After time series data is transformed into the frequency domain via FFT, it is segmented into patches, each represented by a vector of $N$ basis energies:
$\mathbf{E}^{(q)} = \big(E_1^{(q)}, E_2^{(q)}, \ldots, E_N^{(q)}\big), \quad q = 1, 2, \ldots, Q$,
where \(E_k^{(q)} \geq 0\) is the energy of the \(k\)-th frequency basis in patch \(q\).

\begin{definition}[\textbf{Basis evolution criterion}]
For each basis $k$, the relative energy change between two consecutive patches $q-1$ and $q$ is:
\begin{equation}
\delta_k^{(q)} = \frac{|E_k^{(q)} - E_k^{(q-1)}|}{E_k^{(q-1)} + \eta},
\end{equation}
where $\eta > 0$ is a small constant to avoid division by zero. Basis $k$ is said to \emph{evolve} at patch $q$ if
\begin{equation}
\delta_k^{(q)} > \epsilon,
\end{equation}
where $\epsilon > 0$ is a fixed threshold.
\end{definition}

\begin{definition}[\textbf{Patch-level basis evolution}]
A patch $q$ is considered to exhibit basis evolution if the fraction of evolving bases exceeds a threshold $\tau \in (0,1]$:
\begin{equation}
\frac{1}{N} \sum_{k=0}^{N-1} \mathbf{1}\big(\delta_k^{(q)} > \epsilon\big) > \tau,
\end{equation}
where $\mathbf{1}(\cdot)$ is the indicator function.
\end{definition}

\begin{definition}[\textbf{Degree of basis evolution}]
Let $\mathcal{Q}_e = \{ q \mid \text{patch } q \text{ exhibits basis evolution} \}$ be the set of evolving patches. The degree of basis evolution over $Q$ patches is:
\begin{equation}
D_{evolution} = \frac{|\mathcal{Q}_e|}{Q} \in [0,1].
\end{equation}
\end{definition}

Basis evolution reflects the non-stationary nature of time series, as the frequency components that characterize the data evolve over time. The non-stationary nature complicates modeling and prediction in the frequency domain.

\subsection{Strong and Weak Convergence}
\label{sec:weak_convergence}

In statistical learning theory~\cite{Vapnik20}, convergence in Hilbert spaces is categorized into \emph{strong} and \emph{weak convergence}~\cite{Li25}. Specifically, a sequence of functions $\{f_h(\boldsymbol{x})\}_{h=1}^{\infty}$ is said to converge strongly to a target function $f(\boldsymbol{x})$ if:
\begin{equation}
    \lim_{h \to \infty} \| f_h(\boldsymbol{x}) - f(\boldsymbol{x}) \| = 0,
    \label{eq:strong_convergence}
\end{equation}
where the norm is defined in the corresponding Hilbert space. In contrast, $\{f_h(\boldsymbol{x})\}_{h=1}^{\infty}$ converges weakly to $f(\boldsymbol{x})$ if:
\begin{equation}
    \lim_{h \to \infty} \langle \phi(\boldsymbol{x}), f_h(\boldsymbol{x}) - f(\boldsymbol{x}) \rangle = 0, \quad \forall\, \phi(\boldsymbol{x}) \in L_2,
    \label{eq:weak_convergence}
\end{equation}
where $\langle \cdot, \cdot \rangle$ denotes the inner product in $L_2$ space.

Strong convergence imposes stricter pointwise stability. It offers robust theoretical guarantees, but requires large datasets and extensive training. 
In contrast, weak convergence focuses on statistical behavior across the data distribution and imposes less restrictive requirements. It enables faster training and better performance with sparse data, while maintaining rigorous mathematical foundations.  In this work, we aim to combine strong and weak convergence. 


\section{\sysname Design}

We present \sysname's design, aiming to address concept drift and basis evolution. 

\subsection{Model Architecture}

To effectively capture complex temporal dependencies and concept drift, \sysname primarily operates in the frequency domain. Specifically, the raw multivariate time series data is first preprocessed and transformed into the frequency domain via the Fast Fourier Transform (FFT), which decomposes the signals into orthogonal sinusoidal basis functions. This transformation, along with the resulting frequency domain representation, reveals rich spectral characteristics that facilitate the design of specialized modules capable of modeling intricate correlations and evolving patterns in the data, while adaptively handling basis evolution.
\sysname is composed of three main components, illustrated in Figure~\ref{fig:method}:

\begin{itemize}[leftmargin=*]
    \item \textbf{Embedding and transformation:} This module applies Channel Independent (CI) processing and Instance Normalization (IN) to the raw input data. The normalized data is segmented into patches and converted into the frequency domain via FFT. These frequency-domain patches are then embedded into a high-dimensional feature space through a dedicated embedding layer, enabling effective feature extraction.
    
    \item \textbf{Frequency domain backbone:} Operating on complex-valued frequency patches, this backbone employs complex linear layers to capture intra-patch correlations. It explicitly models amplitude and phase components to handle concept drift, while an attention mechanism adaptively learns weights for the sinusoidal basis to address basis evolution. The processed features are recombined into complex representations for subsequent processing.
    
    \item \textbf{Output projection module:} This module generates frequency-domain predictions by flattening and applying a linear projection. The predicted signals are then transformed back to the time domain through inverse FFT (iFFT), followed by instance denormalization to produce the final forecasts.

    \item \textbf{Composite loss function:} To effectively handle basis evolution and concept drift, \sysname employs a composite loss combining three terms: a Huber loss with hybrid convergence that balances strong and weak convergence for better generalization under sparse and noisy data; an FFT-domain loss that directly minimizes prediction errors in the frequency domain, thus explicitly addressing basis evolution; and a phase regularization term that enforces smooth phase transitions to enhance stability and robustness of the learned representations.
\end{itemize}

Through the integration of these components within a unified frequency-domain framework, \sysname effectively captures both global and local temporal dynamics, enabling interpretable and robust time series forecasting.

\begin{figure}[!t]
\centering
\includegraphics[width=3.3in]{./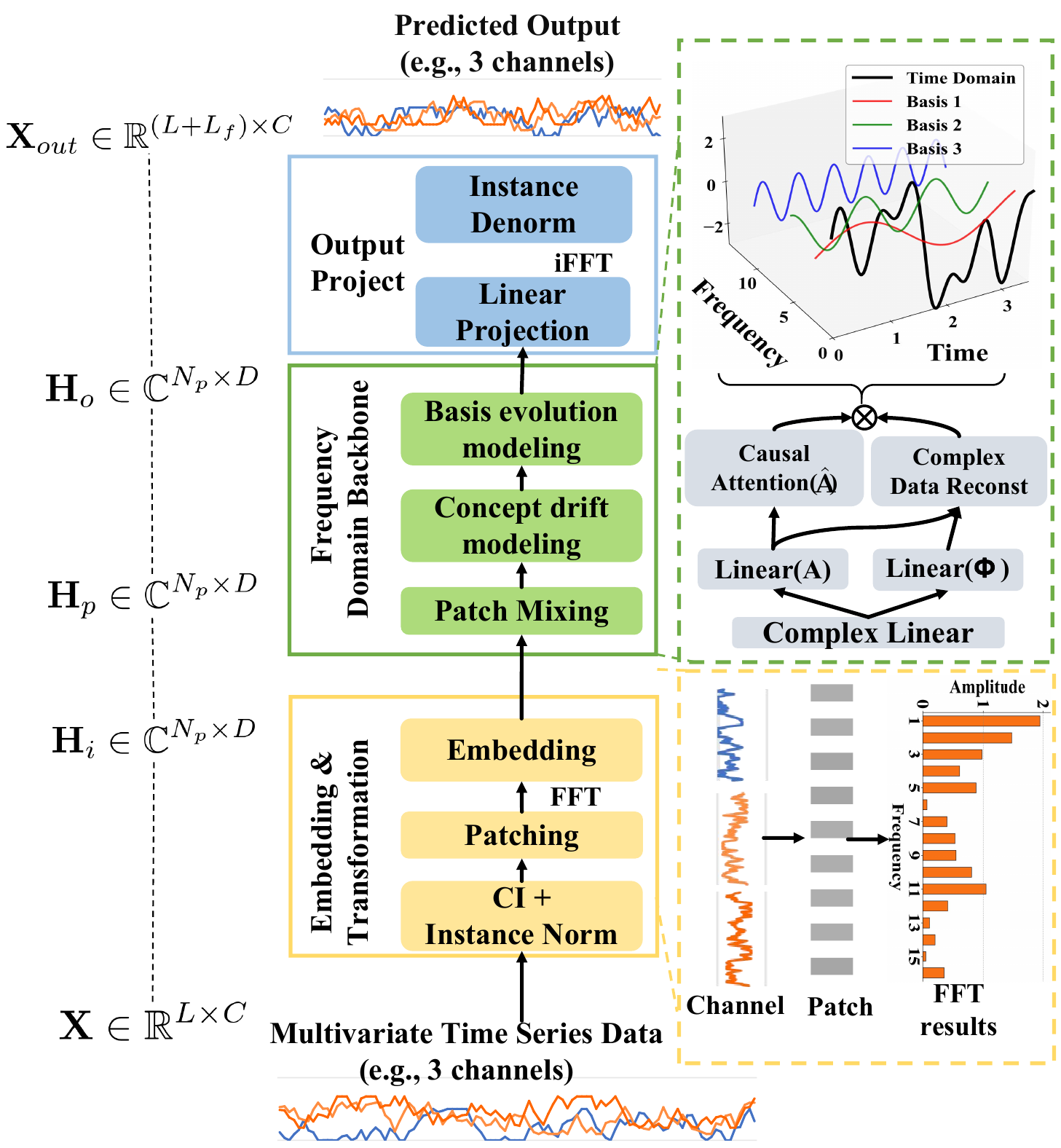} 
\caption{Model architecture of \sysname. It tranforms multivariate time series into the frequency domain through a sequence of steps including CI, IN, patching, and FFT. It captures intra-patch correlations using complex linear layers. It models concept drift via linear transformations, and basis evolution via causal attention mechanisms. It finally generates predictions by a flattened linear projection layer.}
\label{fig:method}
\end{figure}

\subsection{Embedding and Transformation}

Let $\mathbf{X} = [X_{c,l} : c \in [C], l \in [L]]$ denote a multivariate time series instance with $C$ variables and $L$ timestamps. Each instance is first processed using Channel Independent (CI) processing and segmented into overlapping patches following the patching scheme~\citep{Nie23}. The resulting patches are represented as $\mathbf{X}_P \in \mathbb{R}^{N_p \times L_p}$, where $N_p$ is the number of patches and $L_p$ is the length of each patch. These patches are then transformed into the frequency domain via FFT. Subsequently, a linear embedding layer projects the frequency-domain patches into a higher-dimensional feature space, yielding $\mathbf{H}_i \in \mathbb{C}^{N_p \times D}$, where $D$ denotes the embedding dimension:
\begin{equation}
\begin{aligned}
    \mathbf{X}_P &= \mathrm{FFT}(\mathrm{Patching}(\mathrm{CI}(\mathbf{X}))), \\
    \mathbf{H}_i &= \mathbf{W}_{\mathrm{embed}} \cdot \mathbf{X}_P.
\end{aligned}
\end{equation}
Here, $\mathbf{H}_i$ is a complex-valued tensor capturing rich frequency features for downstream processing.


\subsection{Frequency Domain Backbone}

Starting from the embedded frequency-domain input $\mathbf{H}_i \in \mathbb{C}^{N_p \times D}$, \sysname applies a complex-valued linear transformation to model intra-patch correlations:
\begin{equation}
    \mathbf{H}_P = \mathrm{Linear}_{\mathbb{C}}(\mathbf{H}_i) = \mathbf{W}_{\mathbb{C}} \cdot \mathbf{H}_i + \mathbf{b}_{\mathbb{C}},
\end{equation}
where $\mathbf{W}_{\mathbb{C}} \in \mathbb{C}^{D \times D}$ and $\mathbf{b}_{\mathbb{C}} \in \mathbb{C}^D$ are learnable complex weights and biases, and the output $\mathbf{H}_P \in \mathbb{C}^{N_p \times D}$ retains the same dimensions as the input.

This complex linear transformation effectively models the localized frequency interactions within each patch, enabling the network to extract rich amplitude and phase information that is crucial for representing temporal dynamics. Such frequency-domain representations naturally facilitate the characterization of concept drift, as temporal distributional shifts manifest as variations in these frequency components.

\para{Learning of concept drift.}
Concept drift refers to temporal distributional shifts, which can be equivalently characterized in the frequency domain as variations in amplitude and phase distributions across localized frequency patches (Figure~\ref{fig:concept_drift}).

\begin{lemma}[Equivalence of Concept Drift Modeling in Temporal and Frequency Domains]
A non-stationary time series with time-varying distribution exhibits concept drift. Under linear time-invariant signal decomposition, modeling distributional shifts in the temporal domain is equivalent to modeling independent changes in amplitude and phase in the frequency domain.
\end{lemma}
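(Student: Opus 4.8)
The plan is to read the lemma as a statement about the DFT regarded as a change of coordinates, so the first task is to make ``distributional shift'' and ``equivalent'' precise. I would fix a window length $N$ and regard a segment of the series as a vector $x\in\mathbb{R}^N$; concept drift between consecutive patches is then an update $x\mapsto x'=x+\Delta x$, and a time-varying distribution is a family of such vectors (or of their laws) indexed by the patch index $q$. The structural fact that does all the work is that the map $\mathcal{F}\colon x\mapsto X$ defined by Equation~\eqref{eq:DFT} is a linear bijection of $\mathbb{C}^N$ onto itself, unitary up to the normalization constant by Parseval's identity; hence $\Delta x$ and $\Delta X=\mathcal{F}(\Delta x)$ determine one another uniquely, and any family $\{x^{(q)}\}_q$ corresponds to exactly one family $\{X^{(q)}\}_q$. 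This already yields the ``temporal $\Leftrightarrow$ frequency'' half of the equivalence. The hypothesis of a linear time-invariant signal decomposition is exactly what licenses using the sinusoids $e^{-j2\pi kn/N}$ as the operative basis and $\mathcal{F}$ as the relevant coordinate map, since complex exponentials are the eigenfunctions of LTI operators.

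Next I would pass from the Cartesian spectral coordinates, the real and imaginary parts $(a[k],b[k])$ of $X[k]$, to the polar coordinates $(A[k],\phi[k])$ given in Equation~\eqref{eq:amp&phi}, together with the DC term $a_0$ of Equation~\eqref{eq:reformulation}. On each frequency bin $k$ with $A[k]>0$, the map $(a[k],b[k])\mapsto(A[k],\phi[k])$ is a diffeomorphism from $\mathbb{R}^2\setminus\{0\}$ onto $(0,\infty)\times(-\pi,\pi]$, with inverse $a[k]=A[k]\cos\phi[k]$, $b[k]=A[k]\sin\phi[k]$. Composing with $\mathcal{F}$, a concept drift $x\mapsto x'$ is therefore equivalent to the collection of bin-wise updates $(A[k],\phi[k])\mapsto(A'[k],\phi'[k])$ together with the update of $a_0$, i.e., to a change in the amplitude spectrum and a change in the phase spectrum. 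Because the radial (amplitude) and angular (phase) coordinate curves are orthogonal at every point, the two families of updates can be specified and parametrized separately with no loss of expressive power; this separability is the precise meaning I would attach to ``independent changes in amplitude and phase,'' and I would flag explicitly that it is a statement about the coordinate system, not statistical independence of the amplitude and phase trajectories.

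To finish, I would package the two steps into one chain of bijections $\{x^{(q)}\}_q \leftrightarrow \{X^{(q)}\}_q \leftrightarrow \{(a_0^{(q)}, A^{(q)}, \phi^{(q)})\}_q$ and conclude that a model class acting on the temporal representation and the corresponding model class acting on the (amplitude, phase) representation reach exactly the same set of drifts; in particular, a model that learns amplitude dynamics and phase dynamics independently across patches, as \sysname's backbone does, represents precisely the temporal distributional shifts that define concept drift. The main obstacle is not computational but definitional: pinning down whether ``distributional shift'' means a deterministic per-patch map or a drift of probability laws, handling the degenerate bins where $A[k]=0$ (phase undefined) or $a[k]=0$ so that the polar change of coordinates is well posed, e.g.\ by excising a measure-zero set or adopting the $\operatorname{atan2}$ and phase-continuity conventions, and being careful that the conjugate-symmetry constraint $X[N-k]=\overline{X[k]}$ for real inputs and the $2\pi$-ambiguity of $\phi[k]$ only reduce the number of free bins and do not affect the claimed equivalence.
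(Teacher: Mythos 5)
Your proposal follows essentially the same route as the paper's own proof: both arguments rest on the DFT being a linear, invertible (information-preserving) map, so that temporal distributional shifts correspond bijectively to shifts in the spectral representation. Your version is considerably more careful than the paper's—in particular, you explicitly handle the further nonlinear change of coordinates from $(a[k],b[k])$ to $(A[k],\phi[k])$, the degenerate bins where $A[k]=0$, the conjugate-symmetry constraint, and the correct (coordinate-wise rather than statistical) reading of ``independent''—all of which the paper's one-paragraph proof glosses over, but the underlying idea is identical.
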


\begin{proof}
Any time series can be decomposed into frequency components via the Fourier transform (see Section~\ref{sec:preliminary}, Equation~\eqref{eq:reformulation}). Since the discrete Fourier transform (DFT) is a linear, invertible mapping—and the fast Fourier transform (FFT) provides an efficient way to compute it—it preserves all information contained in the original time series. Consequently, any temporal changes in the series, such as shifts in mean, variance, or other distributional properties, manifest as corresponding changes in the amplitude and phase of the frequency components. This one-to-one correspondence guarantees that modeling concept drift in the time domain is fully equivalent to modeling it in the frequency domain, without any loss of information.
\end{proof}

Based on the complex linear transformation output $\mathbf{H}_P$, we extract amplitude $\mathbf{A} \in \mathbb{R}^{N_p \times D}$ and phase $\boldsymbol{\phi} \in [-\pi, \pi]^{N_p \times D}$ components. To effectively capture concept drift, \sysname models amplitude and phase variations across patches independently. Specifically, it employs two separate linear layers to learn the inter-patch correlations for amplitude and phase:
\begin{equation}
\begin{aligned}
    \hat{\mathbf{A}} &= \mathrm{Linear}_{Amp}(\mathbf{A}) = \mathbf{W}_{Amp} \mathbf{A} + \mathbf{b}_{Amp}, \\
    \hat{\boldsymbol{\phi}} &= \mathrm{Linear}_{\phi}(\boldsymbol{\phi}) = \mathbf{W}_{\phi} \boldsymbol{\phi} + \mathbf{b}_{\phi},
\end{aligned}
\label{eq:linear_amp&phi}
\end{equation}
where $\mathbf{W}_{Amp}, \mathbf{W}_{\phi} \in \mathbb{R}^{D \times D}$ and $\mathbf{b}_{Amp}, \mathbf{b}_{\phi} \in \mathbb{R}^D$ are learnable parameters. This disentangled design enables interpretable and effective adaptation to non-stationary time series by separately modeling amplitude and phase drift dynamics.

\begin{figure}[!t]
    \centering
    \subfigure[Amplitude variation between consecutive frequency patches]{
        \includegraphics[width=0.225\textwidth]{./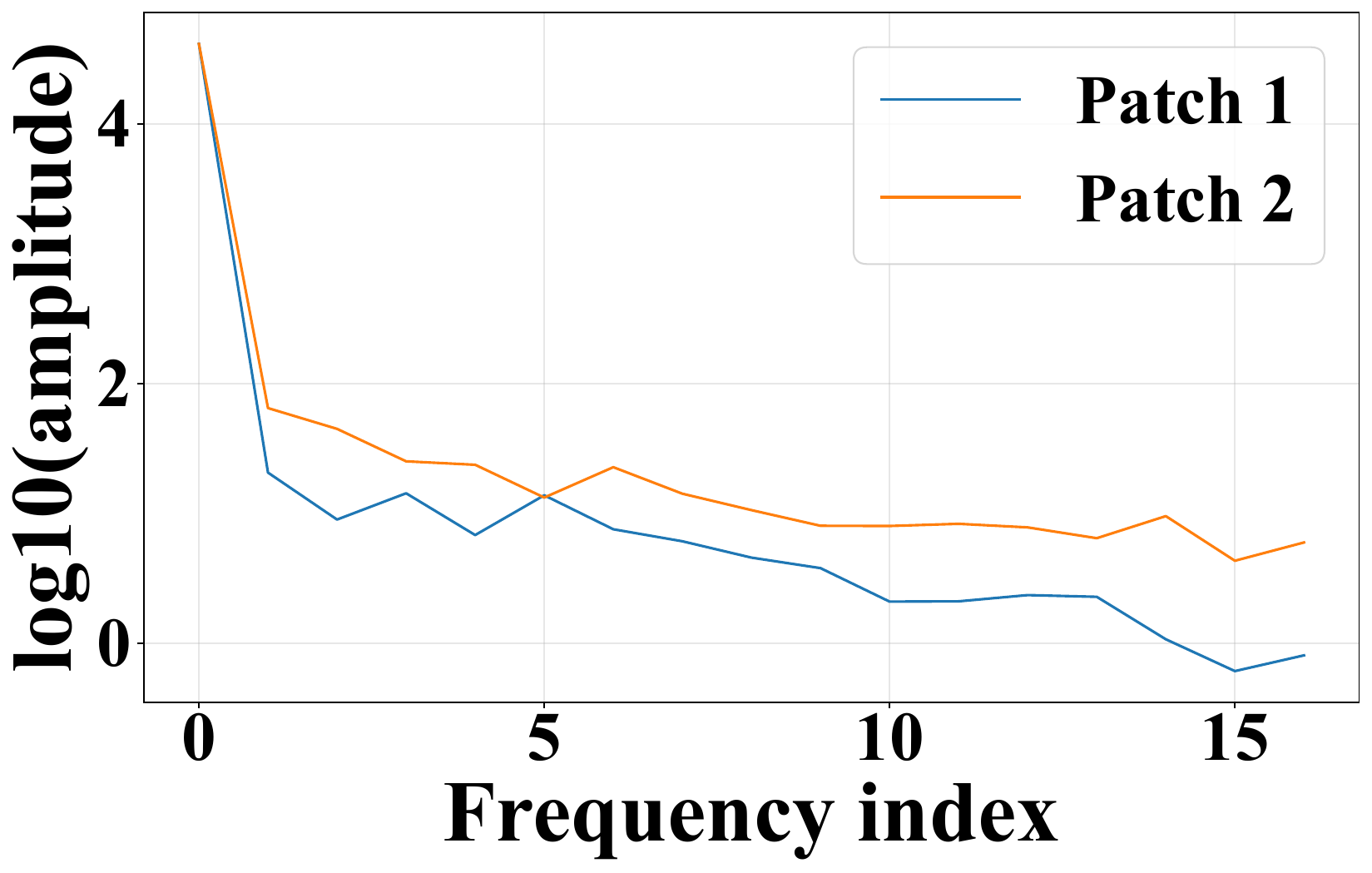}
    }
    \hfill
    \subfigure[Phase variation between consecutive frequency patches]{
        \includegraphics[width=0.225\textwidth]{./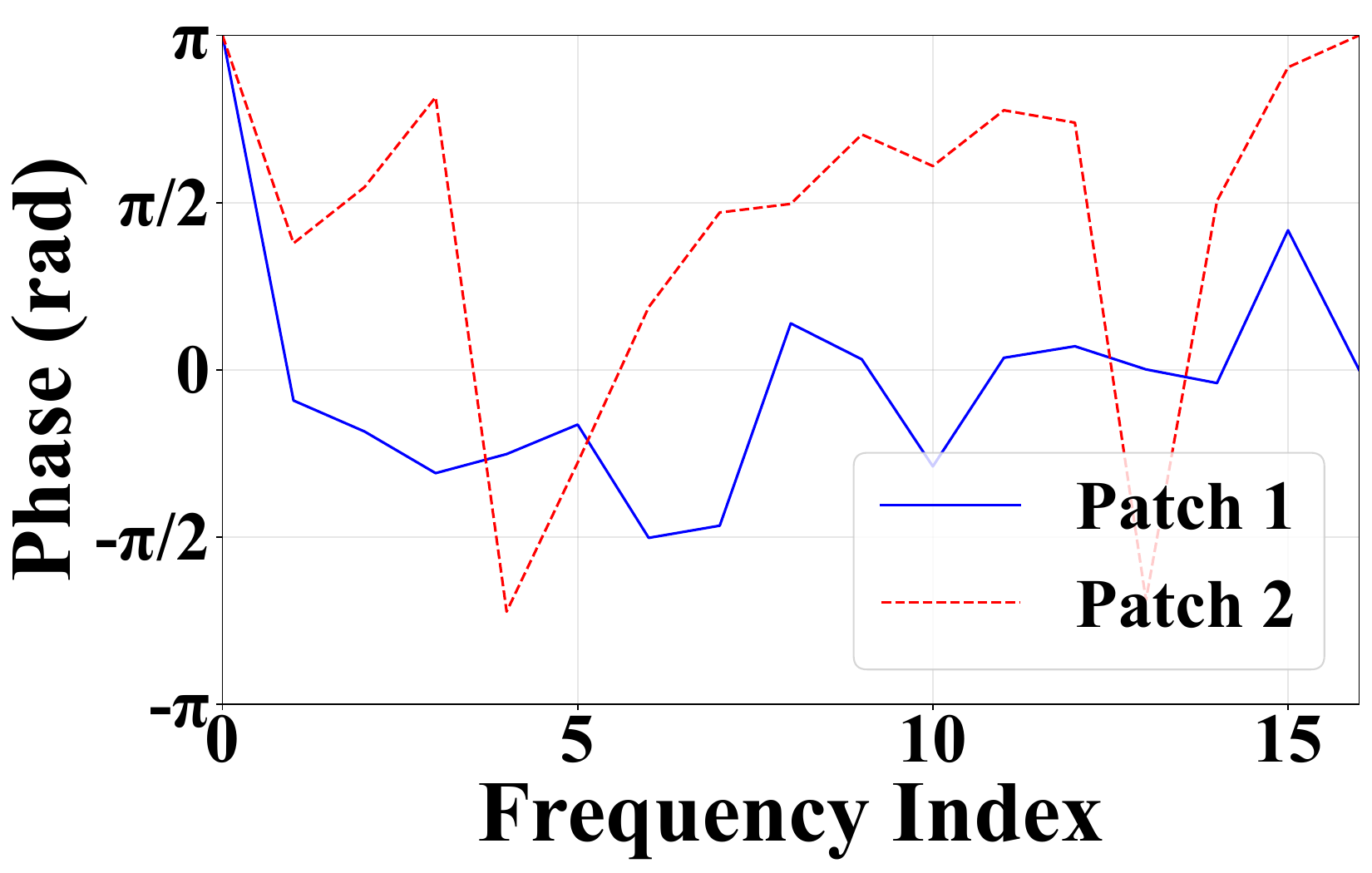}
    }
    \vspace{-9pt}
    \caption{Variations in amplitude and phase distributions between consecutive frequency patches in the frequency domain. The patches are sampled from the Weather and Etth1 datasets, respectively.}
    \label{fig:concept_drift}
    \vspace{-9pt}
\end{figure}

\para{Learning of basis evolution.}
While linear layers effectively capture concept drift through amplitude and phase variations, their capacity to model the more complex, non-linear temporal dynamics of frequency basis evolution is limited. In particular, traditional frequency-domain models relying solely on linear transformations struggle to adapt to abrupt changes or long-range dependencies in the spectral bases. In contrast, causal attention provides a flexible mechanism to dynamically weight and integrate historical amplitude features, making it better suited to handle sudden shifts and intricate basis evolution patterns.

\sysname leverages a \emph{causal masked attention} mechanism applied directly on the previous outputed amplitude representations $\hat{\mathbf{A}} \in \mathbb{R}^{N_p \times D}$ obtained from the amplitude linear layer (Equation~\eqref{eq:linear_amp&phi}). This sequence of amplitude embeddings compactly represents the frequency bases across patches \(p=1,\ldots,N_p\).
The causal attention offers three key advantages over linear layers:

\begin{itemize}[leftmargin=*]
    \item \textbf{Adaptive temporal weighting:} It dynamically learns to weigh historical amplitude features, focusing on the most relevant past patches for the current basis evolution.
    \item \textbf{Modeling long-range dependencies:} Self-attention naturally captures complex dependencies across distant patches, essential for representing gradual or abrupt spectral changes.
    \item \textbf{Preserving causality:} The causal mask ensures that the basis at patch \(p\) depends only on current and past patches \(\leq p\), maintaining temporal consistency required for forecasting.
\end{itemize}

Formally, the amplitude features are projected into queries and keys:
\begin{equation}
\label{eq:att_QK}
\mathbf{Q} = \hat{\mathbf{A}} \mathbf{W}_Q, \quad \mathbf{K} = \hat{\mathbf{A}} \mathbf{W}_K,
\end{equation}
where \(\mathbf{W}_Q, \mathbf{W}_K \in \mathbb{R}^{D \times d}\) are learnable parameters, and \(d\) is the attention dimension.

The scaled dot-product attention scores are masked causally by \(\mathbf{M} \in \{0, -\infty\}^{N_p \times N_p}\):
\[
\mathbf{M}_{p,q} = \begin{cases}
0, & q \leq p \\
-\infty, & q > p
\end{cases},
\]
where $q \leq p$ indicates that attention at patch $p$ is computed only over patch $p$ and all preceding patches $q$, ensuring causality by excluding future patches.
The attention weights $\mathbf{W} \in \mathbb{R}^{N_p \times N_p}$ are computed as
\begin{equation}
\mathbf{W} = \mathrm{softmax}\left(\frac{\mathbf{Q} \mathbf{K}^\top}{\sqrt{d}} + \mathbf{M}\right).
\end{equation}

To further refine intra-patch importance, amplitude vector $\hat{\mathbf{A}}$ is projected by a learnable linear layer:
\begin{equation}
\mathbf{V} = \mathbf{W}_p \hat{\mathbf{A}} + \mathbf{b}_p,
\end{equation}
where $\mathbf{V} \in \mathbb{R}^{N_p \times D}$, $\mathbf{W}_p \in \mathbb{R}^{D \times D}$ and $\mathbf{b}_p \in \mathbb{R}^D$.
The final dynamic weights $\mathbf{U}$ modulating the frequency bases are obtained by combining inter-patch attention and intra-patch projections:
\begin{equation}
\mathbf{U} = \mathbf{W} \mathbf{V},
\end{equation}
with $\mathbf{U} \in \mathbb{R}^{N_p \times D}$.
These adaptive weights $\mathbf{U}$ are applied element-wise to the original frequency bases $\mathbf{B}$, producing the dynamically evolved bases:
\begin{equation}
\label{eq:att_final}
\mathbf{H}_{o} = \mathbf{U} \odot \mathbf{B},
\end{equation}
where $\mathbf{H}_{o} \in \mathbb{C}^{N_p \times D}$, $\odot$ denotes element-wise multiplication.

This causal attention-based design enables \sysname to flexibly and effectively capture complex, non-linear, and temporally adaptive basis evolution patterns, surpassing the representational limitations of traditional linear layers.

\subsection{Output Projection}

After backbone processing, \sysname flattens the output $\mathbf{H}_{o}$ and passes it through a linear projection layer to produce predictions in the frequency domain. These are then transformed back to the time domain using iFFT, followed by instance denormalization, to yield the final forecasts:
\begin{equation}
\label{eq:model_output}
\mathbf{X}_{out} = \mathbf{Denorm}(\mathbf{iFFT}(\mathbf{W}_{\text{LinProj}} \cdot \mathbf{Flatten}(\mathbf{H}_{o})))
\end{equation}
where $\mathbf{X_{out}} \in \mathbb{R}^ {L_{pred} \times C}$, with $L_{pred}$ denoting the prediction length.

\subsection{Loss Function}

After the output projection module, we need to quantify the loss between $\mathbf{X}_{out}$ and the ground truth $\mathbf{X}_{true}$.
\sysname employs a composite loss comprising the Huber loss with hybrid convergence ($\mathcal{L}_{wh}$), FFT loss ($\mathcal{L}_{\text{fft}}$), and phase regularization ($\mathcal{R}_{\phi}$). This loss also explicitly guides the model to address concept drift and basis evolution in the frequency domain, thereby providing a clear objective for parameter optimization:
\begin{equation}
\mathcal{L} = \mathcal{L}_{wh} + \mathcal{L}_{\text{fft}} + \mathcal{R}_{\phi}.
\label{eq:loss}
\end{equation}
The individual components are detailed as follows.

\para{Huber loss with hybrid convergence.} To balance strong and weak convergence and improve generalization under sparse data (Section~\ref{sec:preliminary}), \sysname adopts Huber loss~\cite{Huber81}, which smoothly interpolates between $\ell_2$ and $\ell_1$ losses:
\begin{equation}
    \mathcal{L}_{\delta}(\mathbf{X}_{true}, \mathbf{X}_{out}) = \frac{1}{L_{pred}}\sum_{l=1}^{L_{pred}}\delta^2 \left( \sqrt{1 + \left(\frac{x_{true} - x_{out}}{\delta}\right)^2} - 1 \right)
    \label{eq:Huber_loss}
\end{equation}
where $\delta$ is a hyperparameter controlling the transition threshold. 

To incorporate weak convergence (Equation~\eqref{eq:weak_convergence}), the Huber loss is weighted by a matrix $\mathbf{W} \in \mathbb{R}^{1 \times B}$ (with batch size $B$) that linearly combines identity and predicate-based components:
\begin{equation}
    \mathcal{L}_{wh} = \sum_{b=1}^B \mathbf{W} \cdot \mathcal{L}_{\delta}(x_{true}, x_{out}), \quad
    \mathbf{W} = \hat{\tau} \mathbf{I} + \tau \mathbf{P},
    \label{eq:weak_Huber}
\end{equation}
where $\tau = 1 - \hat{\tau}$ balances strong and weak convergence, $\mathbf{I}$ is the identity matrix, and $\mathbf{P}$ is the empirical covariance matrix of predicates:
\begin{equation}
    \mathbf{P} = \frac{1}{m} \sum_{s=1}^m \psi_s \psi_s^\top.
\end{equation}
where $m$ is the number of predicates. For simplicity, we use a single predicate $\psi(\boldsymbol{x}) = 1$ in this work. This formulation leverages statistical invariants captured by weak convergence to enhance robustness and generalization, particularly in noisy or sparse scenarios.

\para{FFT loss.} The FFT loss, $\mathcal{L}_{\text{fft}}$, is defined as the mean absolute error (MAE) between the predicted and ground truth sequences in the frequency domain:
\begin{equation}
\mathcal{L}_{\text{fft}} = \frac{1}{N_f} \sum_{k=1}^{N_f} \left| \mathbf{FFT}(\mathbf{X}_{true}) - \mathbf{FFT}(\mathbf{X}_{out}) \right|
\label{eq:fft_loss}
\end{equation}
where $N_f$ is the number of bases of the predicted sequence in the frequency domain. This loss explicitly addresses basis evolution by minimizing discrepancies in frequency basis vectors.

\para{Phase regularization.} To ensure smooth and stable phase transitions, \sysname introduces phase regularization to constrain phase changes in the predicted sequence. It formulates a first-order difference penalty:
\begin{equation}
\mathcal{R}_{\phi} = \lambda \frac{1}{D - 1} \sum_{d=1}^{D - 1} \left( \phi_{out}^{d+1} - \phi_{out}^{d} \right)^2, 
\label{eq:phase_reg}
\end{equation}
where $\lambda$ is a weighting factor, $D$ is the model dimensionality, and $\phi_{\text{out}}^d$ denotes the phase feature of the $d$-th dimension. This enhances model robustness and generalizability.

\subsection{Discussion}

Time series forecasting in the time domain is challenging due to complex patterns and limited information. Instead, \sysname first transforms the data into the frequency domain via FFT, which decomposes the signal into multiple frequency basis components. We choose FFT over other basis decomposition methods because it is reversible and parameter-free, requiring no hyperparameter tuning or prior knowledge, thus making it broadly applicable to various time series (see Appendix~\ref{appendix: comparison} for details).

Traditional methods typically model the real and imaginary parts of the transformed signal. However, these lack clear physical interpretation and make it hard to hard to connect the results back to the original data. In contrast, \sysname converts each complex component into amplitude (indicating the strength or energy of each basis) and phase (indicating the timing), modeling them separately (Equation~\eqref{eq:amp&phi}). This decomposition enables the model to capture distinct physical features and concept drift patterns while maintaining a direct link to the original signal.
To better capture the temporal evolution of frequency bases, we introduce a causal attention mechanism that adaptively learns how basis components change and interact across patches (Equations~\eqref{eq:att_QK}--\eqref{eq:att_final}). After forecasting in the frequency domain, the model converts the results back to the time domain. Finally, a composite loss function (Equation~\eqref{eq:loss}) is employed, measuring loss in both time (Equation~\eqref{eq:weak_convergence}) and frequency domains (Equation~\eqref{eq:fft_loss}), while constraining phase shifts (Equation~\eqref{eq:phase_reg}) to ensure smooth and robust predictions.

In summary, \sysname succeeds by extracting more interpretable and physically meaningful features, explicitly modeling their dynamics and interactions, and optimizing with mathematically and physically grounded objectives. This design makes \sysname robust, adaptive, and accurate across diverse real-world forecasting tasks.

\section{Experiments}
\label{sec:exp}

We extensively evaluate the performance of \sysname across a variety of long-term forecasting tasks. We compare \sysname against state-of-the-art baselines, particularly those that emphasize frequency-domain modeling of time series data. We also perform comprehensive ablation studies, hyperparameter sensitivity analyses, and targeted experiments on handling concept drift and basis evolution.

\subsection{Datasets and Baselines}

We conduct experiments on seven widely used public time series forecasting datasets \cite{Wu21} (see Table~\ref{table:datasets}), including the Electricity Transformer Temperature datasets at both hourly and minute-level granularities (ETTh1, ETTh2, ETTm1, ETTm2), as well as Weather, Traffic, and Electricity Power Consumption (ELC). 

\begin{table}[!t]
\centering
\caption{Statistics of datasets}
\label{table:datasets}
\vspace{-9pt}
\begin{tabular}{c|c|c|c}
\toprule
Dataset & Length & Dimension & Frequency \\ \midrule
ETTh & 17420 & 7 & 1 hour \\
ETTm & 69680 & 7 & 15 min \\
Weather & 52696 & 21 & 10 min \\
Electricity & 26304 & 321 & 1 hour \\
Traffic & 17544 & 862 & 1 hour  \\ 
            \bottomrule
\end{tabular}
\end{table}

We select representative baselines for comparison. We reproduce the results of two frequency-based models, FredFormer \cite{Piao24} and WPMixer \cite{Murad25}. For other baselines, including TimeMixer \cite{Wang24}, iTransformer \cite{LiuH24}, PatchTST \cite{Nie23}, and TimesNet \cite{Wu23}, we report the results as published in their respective papers.

\subsection{Experimental Settings}

We choose a look-back window of 96 and forecast future time points $T\in \{96, 192, 336, 720\}$. We use the mean squared error (MSE) and mean absolute error (MAE) as the evaluation metrics and compare the results with the best-performing results of SOTA models presented in papers or reproduced from their published source codes. We implement \sysname in PyTorch \cite{Paszke19} and train it on a single NVIDIA A100 40GB GPU.

\subsection{Forecasting Results}

Table \ref{table:forecasting} summarizes the full forecasting results, with the best performance highlighted in \textbf{bold}. 
The results show that \sysname consistently outperforms all competitors, achieving the best results in 21 out of 35 tasks based on MSE and 26 out of 35 based on MAE. 
On average, \sysname improves MSE by 3\%-8\% compared to the second-best model, WPMixer, and by 20\%-30\% compared to the worst-performing model, TimesNet, with the largest gains observed in certain datasets such as ETTh1 and Traffic. Similarly, for MAE, \sysname's relative improvements are 2\%-7\% over WPMixer and 15\%-25\% over TimesNet across various tasks. Our results demonstrate \sysname’s robustness and superior ability to capture complex temporal dynamics for long-term forecasting.

\begin{table*}[!t]
\centering
\caption{Long-term forecasting results for prediction lengths $T\in \{96, 192, 336, 720\}$. Best results are highlighted in bold.}
\label{table:forecasting}
\vspace{-9pt}
\small 
\setlength{\tabcolsep}{4pt} 
\renewcommand{\arraystretch}{0.8} 
\begin{tabular}{c|c|cc|cc|cc|cc|cc|cc|cc}
\hline
Model &  & \multicolumn{2}{c|}{\sysname} & \multicolumn{2}{c|}{Fredformer} & \multicolumn{2}{c|}{WPMixer} & \multicolumn{2}{c|}{TimeMixer} & \multicolumn{2}{c|}{iTransformer} & \multicolumn{2}{c|}{PatchTST} & \multicolumn{2}{c}{TimesNet} \\ \hline
Dataset & T & \multicolumn{1}{c|}{MSE} & MAE & \multicolumn{1}{c|}{MSE} & MAE & \multicolumn{1}{c|}{MSE} & MAE & \multicolumn{1}{c|}{MSE} & MAE & \multicolumn{1}{c|}{MSE} & MAE & \multicolumn{1}{c|}{MSE} & MAE & \multicolumn{1}{c|}{MSE} & MAE \\ \hline
 & 96 & \multicolumn{1}{c|}{\textbf{0.365}} & \textbf{0.390} & \multicolumn{1}{c|}{0.373} & 0.392 & \multicolumn{1}{c|}{0.375} & 0.393 & \multicolumn{1}{c|}{0.375} & 0.400 & \multicolumn{1}{c|}{0.386} & 0.405 & \multicolumn{1}{c|}{0.460} & 0.447 & \multicolumn{1}{c|}{0.384} & 0.402 \\
 & 192 & \multicolumn{1}{c|}{\textbf{0.420}} & 0.418 & \multicolumn{1}{c|}{0.433} & 0.420 & \multicolumn{1}{c|}{0.428} & \textbf{0.417} & \multicolumn{1}{c|}{0.429} & 0.421 & \multicolumn{1}{c|}{0.441} & 0.436 & \multicolumn{1}{c|}{0.512} & 0.477 & \multicolumn{1}{c|}{0.436} & 0.429 \\
ETTh1 & 336 & \multicolumn{1}{c|}{\textbf{0.458}} & \textbf{0.437} & \multicolumn{1}{c|}{0.470} & 0.437 & \multicolumn{1}{c|}{0.477} & \textbf{0.439} & \multicolumn{1}{c|}{0.484} & 0.458 & \multicolumn{1}{c|}{0.487} & 0.458 & \multicolumn{1}{c|}{0.546} & 0.496 & \multicolumn{1}{c|}{0.638} & 0.469 \\
 & 720 & \multicolumn{1}{c|}{\textbf{0.456}} & \textbf{0.454} & \multicolumn{1}{c|}{0.467} & 0.456 & \multicolumn{1}{c|}{0.460} & 0.454 & \multicolumn{1}{c|}{0.498} & 0.482 & \multicolumn{1}{c|}{0.503} & 0.491 & \multicolumn{1}{c|}{0.544} & 0.517 & \multicolumn{1}{c|}{0.521} & 0.500 \\
 & Avg. & \multicolumn{1}{c|}{\textbf{0.425}} & \textbf{0.425} & \multicolumn{1}{c|}{0.436} & 0.426 & \multicolumn{1}{c|}{0.435} & 0.426 & \multicolumn{1}{c|}{0.447} & 0.440 & \multicolumn{1}{c|}{0.454} & 0.447 & \multicolumn{1}{c|}{0.516} & 0.484 & \multicolumn{1}{c|}{0.495} & 0.450 \\ \hline
 & 96 & \multicolumn{1}{c|}{\textbf{0.282}} & \textbf{0.333} & \multicolumn{1}{c|}{0.293} & 0.342 & \multicolumn{1}{c|}{0.283} & 0.335 & \multicolumn{1}{c|}{0.289} & 0.341 & \multicolumn{1}{c|}{0.297} & 0.349 & \multicolumn{1}{c|}{0.308} & 0.355 & \multicolumn{1}{c|}{0.340} & 0.374 \\
 & 192 & \multicolumn{1}{c|}{\textbf{0.362}} & \textbf{0.383} & \multicolumn{1}{c|}{0.371} & 0.389 & \multicolumn{1}{c|}{0.364} & 0.391 & \multicolumn{1}{c|}{0.372} & 0.392 & \multicolumn{1}{c|}{0.380} & 0.400 & \multicolumn{1}{c|}{0.393} & 0.405 & \multicolumn{1}{c|}{0.402} & 0.414 \\
ETTh2 & 336 & \multicolumn{1}{c|}{0.403} & 0.419 & \multicolumn{1}{c|}{\textbf{0.382}} & \textbf{0.409} & \multicolumn{1}{c|}{0.409} & 0.424 & \multicolumn{1}{c|}{0.386} & 0.414 & \multicolumn{1}{c|}{0.428} & 0.432 & \multicolumn{1}{c|}{0.427} & 0.436 & \multicolumn{1}{c|}{0.452} & 0.452 \\
 & 720 & \multicolumn{1}{c|}{\textbf{0.408}} & \textbf{0.433} & \multicolumn{1}{c|}{0.415} & 0.434 & \multicolumn{1}{c|}{0.429} & 0.443 & \multicolumn{1}{c|}{0.412} & 0.434 & \multicolumn{1}{c|}{0.427} & 0.445 & \multicolumn{1}{c|}{0.436} & 0.450 & \multicolumn{1}{c|}{0.462} & 0.468 \\
 & Avg. & \multicolumn{1}{c|}{\textbf{0.364}} & \textbf{0.392} & \multicolumn{1}{c|}{0.365} & 0.394 & \multicolumn{1}{c|}{0.371} & 0.398 & \multicolumn{1}{c|}{0.364} & 0.395 & \multicolumn{1}{c|}{0.383} & 0.407 & \multicolumn{1}{c|}{0.391} & 0.411 & \multicolumn{1}{c|}{0.414} & 0.427 \\ \hline
 & 96 & \multicolumn{1}{c|}{\textbf{0.310}} & \textbf{0.344} & \multicolumn{1}{c|}{0.326} & 0.361 & \multicolumn{1}{c|}{0.316} & 0.352 & \multicolumn{1}{c|}{0.320} & 0.357 & \multicolumn{1}{c|}{0.334} & 0.368 & \multicolumn{1}{c|}{0.352} & 0.374 & \multicolumn{1}{c|}{0.338} & 0.375 \\
 & 192 & \multicolumn{1}{c|}{\textbf{0.356}} & \textbf{0.375} & \multicolumn{1}{c|}{0.363} & 0.380 & \multicolumn{1}{c|}{0.362} & 0.376 & \multicolumn{1}{c|}{0.361} & 0.381 & \multicolumn{1}{c|}{0.377} & 0.391 & \multicolumn{1}{c|}{0.390} & 0.393 & \multicolumn{1}{c|}{0.374} & 0.387 \\
ETTm1 & 336 & \multicolumn{1}{c|}{\textbf{0.385}} & 0.397 & \multicolumn{1}{c|}{0.395} & 0.403 & \multicolumn{1}{c|}{0.387} & \textbf{0.396} & \multicolumn{1}{c|}{0.390} & 0.404 & \multicolumn{1}{c|}{0.426} & 0.420 & \multicolumn{1}{c|}{0.421} & 0.414 & \multicolumn{1}{c|}{0.410} & 0.411 \\
 & 720 & \multicolumn{1}{c|}{0.448} & \textbf{0.431} & \multicolumn{1}{c|}{0.453} & 0.438 & \multicolumn{1}{c|}{\textbf{0.447}} & 0.432 & \multicolumn{1}{c|}{0.454} & 0.441 & \multicolumn{1}{c|}{0.491} & 0.459 & \multicolumn{1}{c|}{0.462} & 0.449 & \multicolumn{1}{c|}{0.478} & 0.450 \\
 & Avg. & \multicolumn{1}{c|}{\textbf{0.375}} & \textbf{0.387} & \multicolumn{1}{c|}{0.384} & 0.396 & \multicolumn{1}{c|}{0.378} & 0.389 & \multicolumn{1}{c|}{0.381} & 0.395 & \multicolumn{1}{c|}{0.407} & 0.410 & \multicolumn{1}{c|}{0.406} & 0.407 & \multicolumn{1}{c|}{0.400} & 0.406 \\ \hline
 & 96 & \multicolumn{1}{c|}{\textbf{0.170}} & \textbf{0.252} & \multicolumn{1}{c|}{0.177} & 0.259 & \multicolumn{1}{c|}{0.171} & 0.252 & \multicolumn{1}{c|}{0.175} & 0.258 & \multicolumn{1}{c|}{0.180} & 0.264 & \multicolumn{1}{c|}{0.183} & 0.270 & \multicolumn{1}{c|}{0.187} & 0.267 \\
 & 192 & \multicolumn{1}{c|}{0.237} & 0.297 & \multicolumn{1}{c|}{0.243} & 0.301 & \multicolumn{1}{c|}{\textbf{0.233}} & \textbf{0.294} & \multicolumn{1}{c|}{0.237} & 0.299 & \multicolumn{1}{c|}{0.250} & 0.309 & \multicolumn{1}{c|}{0.255} & 0.314 & \multicolumn{1}{c|}{0.249} & 0.309 \\
ETTm2 & 336 & \multicolumn{1}{c|}{0.299} & 0.338 & \multicolumn{1}{c|}{0.302} & 0.340 & \multicolumn{1}{c|}{\textbf{0.290}} & \textbf{0.333} & \multicolumn{1}{c|}{0.298} & 0.340 & \multicolumn{1}{c|}{0.311} & 0.348 & \multicolumn{1}{c|}{0.309} & 0.347 & \multicolumn{1}{c|}{0.321} & 0.351 \\
 & 720 & \multicolumn{1}{c|}{0.399} & 0.395 & \multicolumn{1}{c|}{0.397} & 0.396 & \multicolumn{1}{c|}{\textbf{0.387}} & \textbf{0.390} & \multicolumn{1}{c|}{0.391} & 0.396 & \multicolumn{1}{c|}{0.412} & 0.407 & \multicolumn{1}{c|}{0.412} & 0.404 & \multicolumn{1}{c|}{0.408} & 0.403 \\
 & Avg. & \multicolumn{1}{c|}{0.276} & 0.321 & \multicolumn{1}{c|}{0.280} & 0.324 & \multicolumn{1}{c|}{\textbf{0.270}} & \textbf{0.317} & \multicolumn{1}{c|}{0.275} & 0.323 & \multicolumn{1}{c|}{0.288} & 0.332 & \multicolumn{1}{c|}{0.290} & 0.334 & \multicolumn{1}{c|}{0.291} & 0.333 \\ \hline
 & 96 & \multicolumn{1}{c|}{\textbf{0.162}} & \textbf{0.204} & \multicolumn{1}{c|}{0.163} & 0.207 & \multicolumn{1}{c|}{0.162} & 0.204 & \multicolumn{1}{c|}{0.163} & 0.209 & \multicolumn{1}{c|}{0.174} & 0.214 & \multicolumn{1}{c|}{0.186} & 0.227 & \multicolumn{1}{c|}{0.172} & 0.220 \\
 & 192 & \multicolumn{1}{c|}{\textbf{0.207}} & 0.246 & \multicolumn{1}{c|}{0.211} & 0.251 & \multicolumn{1}{c|}{0.209} & \textbf{0.246} & \multicolumn{1}{c|}{0.208} & 0.250 & \multicolumn{1}{c|}{0.221} & 0.254 & \multicolumn{1}{c|}{0.234} & 0.265 & \multicolumn{1}{c|}{0.219} & 0.261 \\
Weather & 336 & \multicolumn{1}{c|}{\textbf{0.263}} & 0.287 & \multicolumn{1}{c|}{0.267} & 0.292 & \multicolumn{1}{c|}{0.263} & \textbf{0.287} & \multicolumn{1}{c|}{0.251} & 0.287 & \multicolumn{1}{c|}{0.278} & 0.296 & \multicolumn{1}{c|}{0.284} & 0.301 & \multicolumn{1}{c|}{0.246} & 0.337 \\
 & 720 & \multicolumn{1}{c|}{0.340} & \textbf{0.338} & \multicolumn{1}{c|}{0.343} & 0.341 & \multicolumn{1}{c|}{\textbf{0.340}} & 0.339 & \multicolumn{1}{c|}{0.339} & 0.341 & \multicolumn{1}{c|}{0.358} & 0.347 & \multicolumn{1}{c|}{0.356} & 0.349 & \multicolumn{1}{c|}{0.365} & 0.359 \\
 & Avg. & \multicolumn{1}{c|}{\textbf{0.243}} & \textbf{0.269} & \multicolumn{1}{c|}{0.246} & 0.273 & \multicolumn{1}{c|}{0.244} & 0.269 & \multicolumn{1}{c|}{0.240} & 0.271 & \multicolumn{1}{c|}{0.258} & 0.278 & \multicolumn{1}{c|}{0.265} & 0.285 & \multicolumn{1}{c|}{0.251} & 0.294 \\ \hline
 & 96 & \multicolumn{1}{c|}{0.474} & \textbf{0.272} & \multicolumn{1}{c|}{\textbf{0.406}} & 0.277 & \multicolumn{1}{c|}{0.465} & 0.286 & \multicolumn{1}{c|}{0.462} & 0.285 & \multicolumn{1}{c|}{0.395} & 0.268 & \multicolumn{1}{c|}{0.526} & 0.347 & \multicolumn{1}{c|}{0.593} & 0.321 \\
 & 192 & \multicolumn{1}{c|}{0.487} & \textbf{0.269} & \multicolumn{1}{c|}{\textbf{0.426}} & 0.290 & \multicolumn{1}{c|}{0.475} & 0.290 & \multicolumn{1}{c|}{0.473} & 0.296 & \multicolumn{1}{c|}{0.417} & 0.276 & \multicolumn{1}{c|}{0.522} & 0.332 & \multicolumn{1}{c|}{0.617} & 0.336 \\
Traffic & 336 & \multicolumn{1}{c|}{0.484} & \textbf{0.275} & \multicolumn{1}{c|}{\textbf{0.432}} & 0.281 & \multicolumn{1}{c|}{0.489} & 0.296 & \multicolumn{1}{c|}{0.498} & 0.296 & \multicolumn{1}{c|}{0.433} & 0.283 & \multicolumn{1}{c|}{0.517} & 0.334 & \multicolumn{1}{c|}{0.629} & 0.336 \\
 & 720 & \multicolumn{1}{c|}{0.531} & \textbf{0.295} & \multicolumn{1}{c|}{\textbf{0.463}} & 0.300 & \multicolumn{1}{c|}{0.527} & 0.318 & \multicolumn{1}{c|}{0.506} & 0.313 & \multicolumn{1}{c|}{0.467} & 0.302 & \multicolumn{1}{c|}{0.552} & 0.352 & \multicolumn{1}{c|}{0.64} & 0.35 \\
 & Avg. & \multicolumn{1}{c|}{0.494} & \textbf{0.278} & \multicolumn{1}{c|}{\textbf{0.432}} & 0.287 & \multicolumn{1}{c|}{0.489} & 0.298 & \multicolumn{1}{c|}{0.484} & 0.297 & \multicolumn{1}{c|}{0.428} & 0.282 & \multicolumn{1}{c|}{0.529} & 0.341 & \multicolumn{1}{c|}{0.62} & 0.336 \\ \hline
 & 96 & \multicolumn{1}{c|}{0.148} & \textbf{0.236} & \multicolumn{1}{c|}{\textbf{0.147}} & 0.241 & \multicolumn{1}{c|}{0.150} & 0.241 & \multicolumn{1}{c|}{0.153} & 0.247 & \multicolumn{1}{c|}{0.148} & 0.240 & \multicolumn{1}{c|}{0.190} & 0.296 & \multicolumn{1}{c|}{0.168} & 0.272 \\
 & 192 & \multicolumn{1}{c|}{\textbf{0.161}} & \textbf{0.249} & \multicolumn{1}{c|}{0.165} & 0.258 & \multicolumn{1}{c|}{0.162} & 0.252 & \multicolumn{1}{c|}{0.166} & 0.256 & \multicolumn{1}{c|}{0.162} & 0.253 & \multicolumn{1}{c|}{0.199} & 0.304 & \multicolumn{1}{c|}{0.184} & 0.322 \\
Elc & 336 & \multicolumn{1}{c|}{\textbf{0.176}} & \textbf{0.265} & \multicolumn{1}{c|}{0.177} & 0.273 & \multicolumn{1}{c|}{0.179} & 0.270 & \multicolumn{1}{c|}{0.185} & 0.277 & \multicolumn{1}{c|}{0.178} & 0.269 & \multicolumn{1}{c|}{0.217} & 0.319 & \multicolumn{1}{c|}{0.198} & 0.300 \\
 & 720 & \multicolumn{1}{c|}{0.215} & \textbf{0.299} & \multicolumn{1}{c|}{\textbf{0.213}} & 0.304 & \multicolumn{1}{c|}{0.217} & 0.304 & \multicolumn{1}{c|}{0.225} & 0.310 & \multicolumn{1}{c|}{0.225} & 0.317 & \multicolumn{1}{c|}{0.258} & 0.352 & \multicolumn{1}{c|}{0.220} & 0.320 \\
 & Avg. & \multicolumn{1}{c|}{\textbf{0.175}} & \textbf{0.262} & \multicolumn{1}{c|}{0.176} & 0.269 & \multicolumn{1}{c|}{0.177} & 0.267 & \multicolumn{1}{c|}{0.182} & 0.272 & \multicolumn{1}{c|}{0.178} & 0.270 & \multicolumn{1}{c|}{0.216} & 0.318 & \multicolumn{1}{c|}{0.193} & 0.304 \\ \hline
Best\_count &  & \multicolumn{1}{c|}{\textbf{21/35}} & \textbf{26/35} & \multicolumn{1}{c|}{8} & 1 & \multicolumn{1}{c|}{6} & 9 & \multicolumn{1}{c|}{0} & 0 & \multicolumn{1}{c|}{0} & 0 & \multicolumn{1}{c|}{0} & 0 & \multicolumn{1}{c|}{0} & 0 \\ \hline
\end{tabular}
\end{table*}

\subsection{Ablation Results}

To comprehensively assess the effectiveness of our module design, we report the average forecasting results across seven datasets in Table~\ref{table:avg_module_ablation}. Our full model, \sysname, achieves the best average MSE on 5 out of 7 datasets and the best average MAE on 6 out of 7 datasets, consistently outperforming the two variants: \sysname\_{\text{advanced}}, which removes the basis evolution module, and \sysname\_{\text{base}}, which simplifies concept drift modeling. These quantitative improvements highlight the importance of jointly modeling both data drift and basis evolution to capture complex temporal dynamics for accurate forecasting.
We provide the full detailed forecasting results in Appendix (Section~\ref{appendix: full}), which further verify that \sysname attains superior performance in the majority of individual experiments, demonstrating its robustness and effectiveness.

\begin{table}[!t]
\centering
\caption{Average results of module ablation}
\label{table:avg_module_ablation}
\vspace{-9pt}
\small 
\setlength{\tabcolsep}{4pt} 
\renewcommand{\arraystretch}{0.8} 
\begin{tabular}{c|cc|cc|cc}
\toprule
Model & \multicolumn{2}{c|}{\sysname} & \multicolumn{2}{c|}{\sysname\_adv.} & \multicolumn{2}{c}{\sysname\_base} \\ \midrule
Dataset & \multicolumn{1}{c|}{MSE} & MAE & \multicolumn{1}{c|}{MSE} & MAE & \multicolumn{1}{c|}{MSE} & MAE \\
ETTh1 & \multicolumn{1}{c|}{\textbf{0.425}} & \textbf{0.425} & \multicolumn{1}{c|}{0.431} & 0.430 & \multicolumn{1}{c|}{0.434} & 0.427 \\
ETTh2 & \multicolumn{1}{c|}{0.364} & \textbf{0.392} & \multicolumn{1}{c|}{\textbf{0.362}} & 0.392 & \multicolumn{1}{c|}{0.363} & 0.393 \\
ETTm1 & \multicolumn{1}{c|}{\textbf{0.375}} & \textbf{0.387} & \multicolumn{1}{c|}{0.375} & 0.391 & \multicolumn{1}{c|}{0.376} & 0.390 \\
ETTm2 & \multicolumn{1}{c|}{0.276} & 0.321 & \multicolumn{1}{c|}{0.277} & 0.322 & \multicolumn{1}{c|}{\textbf{0.275}} & \textbf{0.320} \\
Weather & \multicolumn{1}{c|}{\textbf{0.243}} & \textbf{0.269} & \multicolumn{1}{c|}{0.245} & 0.272 & \multicolumn{1}{c|}{0.246} & 0.272 \\
Traffic & \multicolumn{1}{c|}{\textbf{0.494}} & \textbf{0.278} & \multicolumn{1}{c|}{0.495} & 0.290 & \multicolumn{1}{c|}{0.506} & 0.308 \\
Elc & \multicolumn{1}{c|}{\textbf{0.175}} & \textbf{0.262} & \multicolumn{1}{c|}{0.178} & 0.264 & \multicolumn{1}{c|}{0.189} & 0.273 \\ \midrule
Best\_Count & \multicolumn{1}{c|}{\textbf{5/7}} & \textbf{6/7} & \multicolumn{1}{c|}{1/7} & 0/7 & \multicolumn{1}{c|}{1/7} & 1/7 \\ \bottomrule
\end{tabular}
\end{table}


We conduct an ablation study by progressively removing components of the loss function to evaluate their individual contributions. Specifically, \sysname\_{\text{enhanced}} removes the phase regulation term $\mathcal{R}_{\phi}$; \sysname\_{\text{advanced}} further removes the FFT loss $\mathcal{L}_{feq}$ based on \sysname\_{\text{base}}; and \sysname\_{\text{base}} discards all specialized loss designs, relying solely on the Huber loss.
Table~\ref{table:avg_loss_ablation} presents the average forecasting results. While the full model \sysname\ shows slightly better average MSE and MAE compared to \sysname\_{\text{enhanced}}, the full detailed results (see Appendix~\ref{appendix: full}) reveal that \sysname\ consistently outperforms all variants on a larger number of individual experiments. This indicates that although the average improvements appear modest, the full model demonstrates more substantial and consistent advantages in specific cases, highlighting the importance of each loss component for robust forecasting performance.

\begin{table}[!t]
\centering
\caption{Average results of loss ablation}
\label{table:avg_loss_ablation}
\vspace{-9pt}
\small 
\setlength{\tabcolsep}{2.5pt} 
\renewcommand{\arraystretch}{0.8} 
\begin{tabular}{c|cc|cc|cc|cc}
\toprule
Model & \multicolumn{2}{c|}{\sysname} & \multicolumn{2}{c|}{\sysname\_enh.} & \multicolumn{2}{c|}{\sysname\_adv.} & \multicolumn{2}{c}{\sysname\_base} \\ \midrule
D
ataset & \multicolumn{1}{c|}{MSE} & MAE & \multicolumn{1}{c|}{MSE} & MAE & \multicolumn{1}{c|}{MSE} & MAE & \multicolumn{1}{c|}{MSE} & MAE \\
ETTh1 & \multicolumn{1}{c|}{\textbf{0.424}} & \textbf{0.424} & \multicolumn{1}{c|}{0.428} & 0.427 & \multicolumn{1}{c|}{0.439} & 0.437 & \multicolumn{1}{c|}{0.433} & 0.433 \\
ETTh2 & \multicolumn{1}{c|}{\textbf{0.363}} & 0.392 & \multicolumn{1}{c|}{0.363} & \textbf{0.391} & \multicolumn{1}{c|}{0.385} & 0.406 & \multicolumn{1}{c|}{0.367} & 0.394 \\
ETTm1 & \multicolumn{1}{c|}{0.374} & \textbf{0.386} & \multicolumn{1}{c|}{\textbf{0.374}} & 0.387 & \multicolumn{1}{c|}{0.384} & 0.401 & \multicolumn{1}{c|}{0.378} & 0.395 \\
ETTm2 & \multicolumn{1}{c|}{\textbf{0.276}} & 0.320 & \multicolumn{1}{c|}{0.277} & \textbf{0.319} & \multicolumn{1}{c|}{0.296} & 0.343 & \multicolumn{1}{c|}{0.282} & 0.327 \\
Weather & \multicolumn{1}{c|}{\textbf{0.243}} & 0.268 & \multicolumn{1}{c|}{0.243} & \textbf{0.267} & \multicolumn{1}{c|}{0.2448} & 0.2710 & \multicolumn{1}{c|}{0.2450} & 0.2705 \\
Traffic & \multicolumn{1}{c|}{0.494} & \textbf{0.277} & \multicolumn{1}{c|}{\textbf{0.487}} & 0.286 & \multicolumn{1}{c|}{0.509} & 0.287 & \multicolumn{1}{c|}{0.510} & 0.290 \\
Elc & \multicolumn{1}{c|}{0.175} & \textbf{0.262} & \multicolumn{1}{c|}{\textbf{0.174}} & 0.262 & \multicolumn{1}{c|}{0.180} & 0.270 & \multicolumn{1}{c|}{0.181} & 0.270 \\ \midrule
\multicolumn{1}{l|}{Best} & \multicolumn{1}{c|}{\textbf{4/7}} & \textbf{4/7} & \multicolumn{1}{c|}{3/7} & 3/7 & \multicolumn{1}{c|}{0} & 0 & \multicolumn{1}{c|}{0} & 0 \\ \bottomrule
\end{tabular}
\end{table}

\subsection{Concept Drift and Basis Evolution}

We quantify the degree of concept drift using ADWIN (Section~\ref{sec:data_drift}) and the degree of basis evolution (Section~\ref{sec:basis_evolution}). To evaluate the impact of these phenomena on model performance, we select two representative univariate time series: Weather\_d11 (dimension 11) and Traffic\_d738 (dimension 738). Weather\_d11 exhibits a concept drift degree of $3.07\%$ and a basis evolution degree of $8.39\%$, whereas Traffic\_d738 shows substantially lower degrees of $0.26\%$ and $1.19\%$, respectively.
We apply \sysname to these datasets and compare its forecasting accuracy against three SOTA frequency-domain models: FredFormer, WPMixer, and FITS. As shown in Table~\ref{table:drift_and_evol}, \sysname consistently outperforms these baselines, especially on Weather\_d11 where data drift and basis evolution are more pronounced. Specifically, on Weather\_d11, \sysname achieves an average MSE reduction of 7.0\% compared to FredFormer, 17.5\% compared to WPMixer, and 12.9\% compared to FITS. In terms of MAE, \sysname improves by about 3.5\%, 7.7\%, and 5.5\% over FredFormer, WPMixer, and FITS, respectively.
On the more stable Traffic\_d738 dataset, \sysname obtains the best average MSE (1.814), improving by 2.3\%, 1.2\%, and 2.1\% over FredFormer, WPMixer, and FITS, respectively. Regarding MAE, \sysname outperforms FredFormer and FITS by 7.1\% and 6.5\%, respectively, while WPMixer achieves a slightly better MAE (0.665) than \sysname (0.669) by about 0.6\%. These results demonstrate \sysname's superior adaptability and robustness in handling dynamic time series forecasting scenarios.

\begin{table}[!t]
\centering
\caption{Effectiveness of concept drift and basis evolution.}
\label{table:drift_and_evol}
\vspace{-9pt}
\small 
\setlength{\tabcolsep}{2.5pt} 
\renewcommand{\arraystretch}{0.8} 
\begin{tabular}{c|c|cc|cc|cc|cc}
\toprule
Model &  & \multicolumn{2}{c|}{\sysname} & \multicolumn{2}{c|}{FredFormer} & \multicolumn{2}{c|}{Wpmixer} & \multicolumn{2}{c}{FITS} \\ \midrule
Dataset & T & \multicolumn{1}{c|}{MSE} & MAE & \multicolumn{1}{c|}{MSE} & MAE & \multicolumn{1}{c|}{MSE} & MAE & \multicolumn{1}{c|}{MSE} & MAE \\ \midrule
 & 96 & \multicolumn{1}{c|}{\textbf{0.110}} & \textbf{0.237} & \multicolumn{1}{c|}{0.131} & 0.260 & \multicolumn{1}{c|}{0.111} & 0.239 & \multicolumn{1}{c|}{0.127} & 0.257 \\
 & 192 & \multicolumn{1}{c|}{\textbf{0.185}} & \textbf{0.312} & \multicolumn{1}{c|}{0.203} & 0.326 & \multicolumn{1}{c|}{0.193} & 0.317 & \multicolumn{1}{c|}{0.200} & 0.322 \\
Weather-d11 & 336 & \multicolumn{1}{c|}{\textbf{0.302}} & 0.395 & \multicolumn{1}{c|}{0.321} & 0.405 & \multicolumn{1}{c|}{0.305} & \textbf{0.395} & \multicolumn{1}{c|}{0.317} & 0.401 \\
 & 720 & \multicolumn{1}{c|}{\textbf{0.462}} & 0.497 & \multicolumn{1}{c|}{0.481} & 0.503 & \multicolumn{1}{c|}{0.469} & \textbf{0.496} & \multicolumn{1}{c|}{0.478} & 0.501 \\
\multicolumn{1}{l|}{} & Avg. & \multicolumn{1}{c|}{\textbf{0.264}} & \textbf{0.360} & \multicolumn{1}{c|}{0.284} & 0.373 & \multicolumn{1}{c|}{0.269} & 0.362 & \multicolumn{1}{c|}{0.280} & 0.370 \\ \midrule
 & 96 & \multicolumn{1}{c|}{\textbf{1.854}} & \textbf{0.687} & \multicolumn{1}{c|}{1.882} & 0.742 & \multicolumn{1}{c|}{1.871} & 0.690 & \multicolumn{1}{c|}{1.921} & 0.741 \\
 & 192 & \multicolumn{1}{c|}{\textbf{1.898}} & 0.687 & \multicolumn{1}{c|}{1.969} & 0.746 & \multicolumn{1}{c|}{1.918} & \textbf{0.679} & \multicolumn{1}{c|}{1.951} & 0.729 \\
Traffic-d738 & 336 & \multicolumn{1}{c|}{\textbf{1.809}} & 0.665 & \multicolumn{1}{c|}{1.879} & 0.720 & \multicolumn{1}{c|}{1.815} & \textbf{0.645} & \multicolumn{1}{c|}{1.846} & 0.705 \\
 & 720 & \multicolumn{1}{c|}{\textbf{1.698}} & \textbf{0.639} & \multicolumn{1}{c|}{1.732} & 0.672 & \multicolumn{1}{c|}{1.742} & 0.646 & \multicolumn{1}{c|}{1.711} & 0.691 \\ 
\multicolumn{1}{l|}{} & Avg. & \multicolumn{1}{c|}{\textbf{1.814}} & 0.669 & \multicolumn{1}{c|}{1.865} & 0.720 & \multicolumn{1}{c|}{1.836} & \textbf{0.665} & \multicolumn{1}{c|}{1.857} & 0.716 \\ \bottomrule
\end{tabular}
\end{table}

\subsection{Scalability Analysis}

To investigate the scalability of \sysname, we train the model with increasing size from both the depth (number of layers) and width (embedding dimension) perspectives. Forecasting experiments are conducted on two datasets: Weather and Electricity. Figure~\ref{fig:scale_analysis} presents the average forecasting results, measured by Mean Squared Error (MSE), on both datasets for various forecasting horizons, including $T \in \{96, 192, 336, 720\}$ time steps, using different numbers of layers and embedding dimensions.

The results demonstrate that, unlike time series foundation models \cite{Das24, Goswami24, Woo24, Liu24, Zhang24}, time series forecasting models are typically trained on domain-specific datasets with limited data volume. As shown in Figure~\ref{fig:scale_analysis}, increasing model capacity---either by enlarging the hidden dimension or stacking more layers---yields diminishing returns after a certain point. Specifically, both the model dimension and layer analysis plots indicate that MSE saturates as the model size increases, and may even slightly worsen due to overfitting.
This phenomenon suggests that, for time series forecasting tasks with constrained data, the scalability of models is fundamentally limited. Once the model capacity matches the representational needs of the data, further scaling does not improve performance. This is in sharp contrast to Foundation Models, where scaling up with abundant data often leads to continuous performance gains.

\begin{figure}[!t]
    \centering
    \subfigure[Model dimension analysis]{
        \includegraphics[width=0.22\textwidth]{./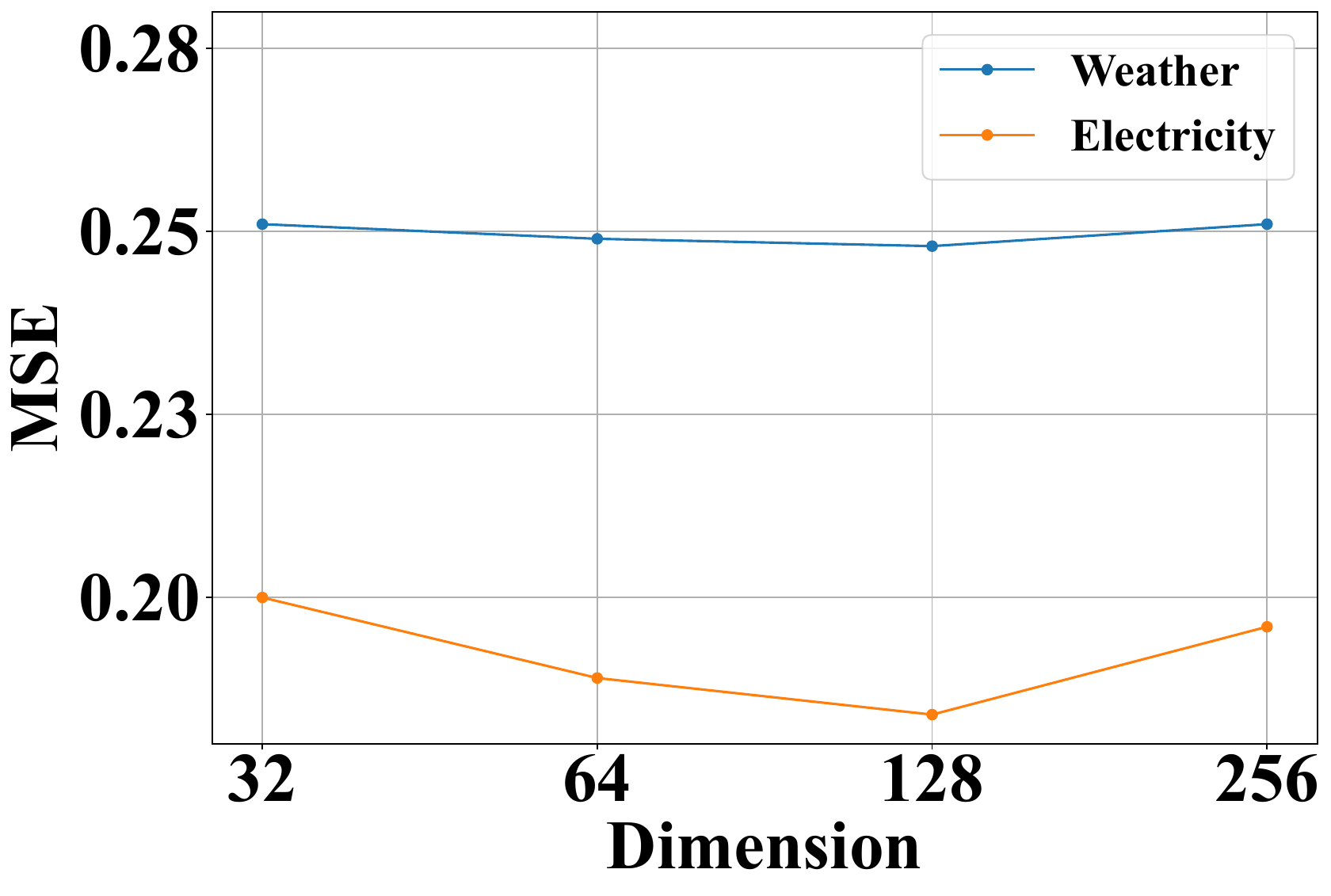}
    }
    \hfill
    \subfigure[Model layer analysis]{
        \includegraphics[width=0.22\textwidth]{./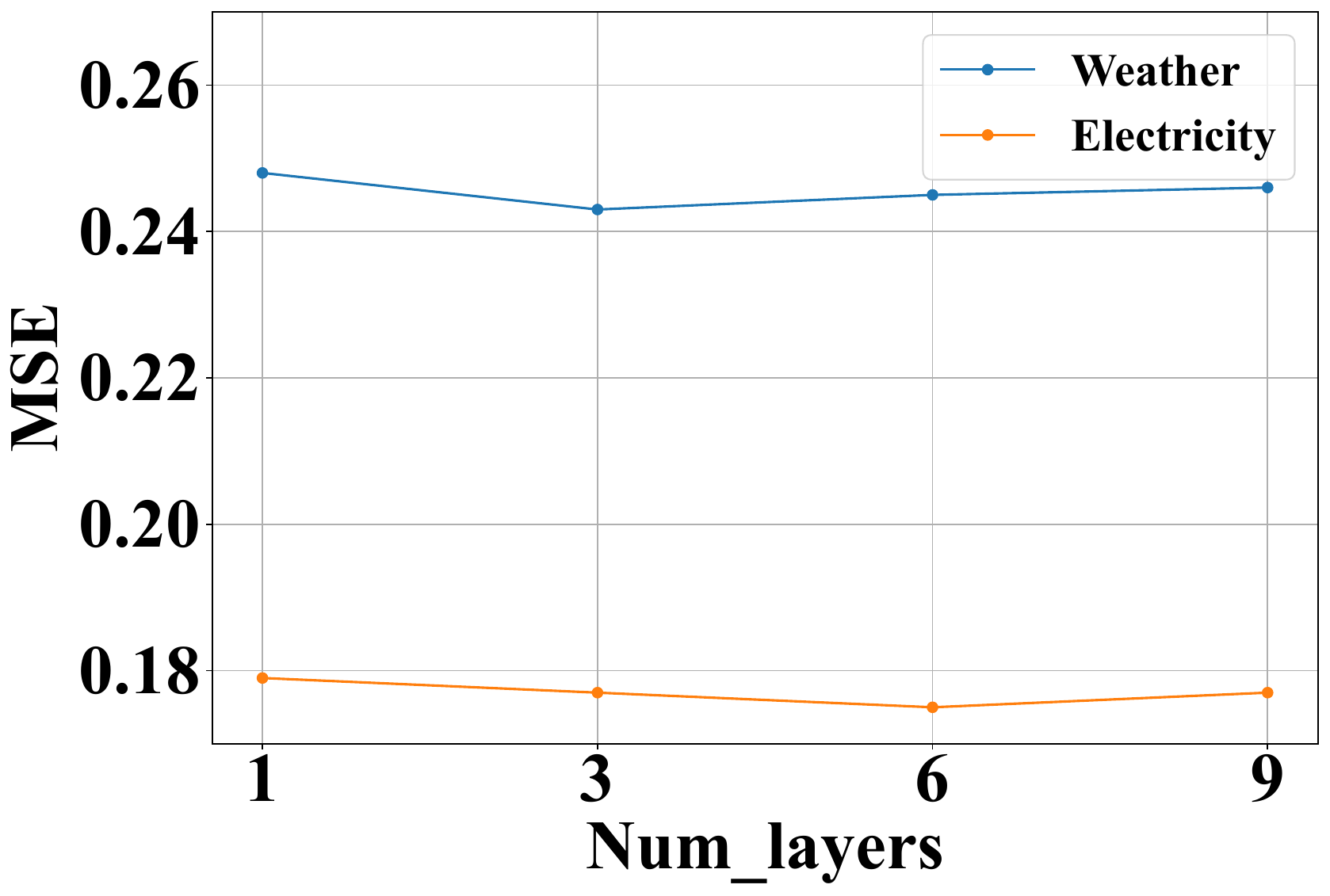}
    }
    \vspace{-12pt}
    \caption{Model scalability analysis}
    \label{fig:scale_analysis}
    \vspace{-9pt}
\end{figure}

\subsection{Hyper-parameter Analysis}

Patch length is a crucial hyper-parameter for \sysname. We evaluate the model's sensitivity to different patch lengths on the Weather and Electricity datasets, forecasting future time points \(T \in \{96, 192, 336, 720\}\).
Table~\ref{table:patch_len} reports the forecasting results measured by MSE and MAE. For the Weather dataset, the best overall performance is achieved with a patch length of 16, yielding an average MSE of 0.245 and MAE of 0.270. For the Electricity dataset, the optimal patch length is 32, with an average MSE of 0.175 and MAE of 0.262. 
Notably, the differences in performance across various patch lengths are marginal. For instance, on Weather, the worst average MSE (0.246 at patch length 4) is only 0.001 higher than the best (0.245 at patch length 16). Similarly, on Electricity, the average MSE varies within 0.006 across all tested patch lengths. This demonstrates that \sysname exhibits strong robustness and low sensitivity to patch length selection, consistent with the scalability analysis discussed earlier.

\begin{table}[!t]
\centering
\caption{Forecasting results of various patch lengths}
\label{table:patch_len}
\vspace{-9pt}
\scriptsize 
\setlength{\tabcolsep}{3pt} 
\renewcommand{\arraystretch}{0.8} 
\begin{tabular}{c|c|cc|cc|cc|cc|cc}
\toprule
Patch Len &  & \multicolumn{2}{c|}{4} & \multicolumn{2}{c|}{8} & \multicolumn{2}{c|}{16} & \multicolumn{2}{c|}{32} & \multicolumn{2}{c}{48} \\ \midrule
Dataset & T & \multicolumn{1}{c|}{MSE} & MAE & \multicolumn{1}{c|}{MSE} & MAE & \multicolumn{1}{c|}{MSE} & MAE & \multicolumn{1}{c|}{MSE} & MAE & \multicolumn{1}{c|}{MSE} & MAE \\ \midrule
 & 96 & \multicolumn{1}{c|}{0.163} & 0.205 & \multicolumn{1}{c|}{0.163} & 0.204 & \multicolumn{1}{c|}{\textbf{0.162}} & 0.203 & \multicolumn{1}{c|}{0.163} & \textbf{0.202} & \multicolumn{1}{c|}{0.163} & 0.204 \\
 & 192 & \multicolumn{1}{c|}{0.210} & 0.248 & \multicolumn{1}{c|}{0.210} & 0.249 & \multicolumn{1}{c|}{\textbf{0.208}} & \textbf{0.246} & \multicolumn{1}{c|}{0.209} & 0.249 & \multicolumn{1}{c|}{0.209} & 0.246 \\
Weather & 336 & \multicolumn{1}{c|}{\textbf{0.266}} & \textbf{0.288} & \multicolumn{1}{c|}{0.267} & 0.290 & \multicolumn{1}{c|}{0.266} & 0.290 & \multicolumn{1}{c|}{0.268} & 0.289 & \multicolumn{1}{c|}{0.268} & 0.292 \\
 & 720 & \multicolumn{1}{c|}{0.343} & 0.342 & \multicolumn{1}{c|}{0.343} & 0.340 & \multicolumn{1}{c|}{\textbf{0.342}} & \textbf{0.339} & \multicolumn{1}{c|}{0.346} & 0.342 & \multicolumn{1}{c|}{0.344} & 0.341 \\
 & Avg. & \multicolumn{1}{c|}{0.246} & 0.271 & \multicolumn{1}{c|}{0.246} & 0.271 & \multicolumn{1}{c|}{\textbf{0.245}} & \textbf{0.270} & \multicolumn{1}{c|}{0.247} & 0.271 & \multicolumn{1}{c|}{0.246} & 0.271 \\ \midrule
 & 96 & \multicolumn{1}{c|}{0.154} & 0.243 & \multicolumn{1}{c|}{0.152} & 0.240 & \multicolumn{1}{c|}{0.149} & 0.237 & \multicolumn{1}{c|}{0.149} & 0.236 & \multicolumn{1}{c|}{\textbf{0.148}} & \textbf{0.235} \\
 & 192 & \multicolumn{1}{c|}{0.165} & 0.253 & \multicolumn{1}{c|}{0.163} & 0.250 & \multicolumn{1}{c|}{0.162} & 0.249 & \multicolumn{1}{c|}{\textbf{0.161}} & 0.249 & \multicolumn{1}{c|}{0.161} & \textbf{0.248} \\
Elc & 336 & \multicolumn{1}{c|}{0.180} & 0.270 & \multicolumn{1}{c|}{0.177} & 0.267 & \multicolumn{1}{c|}{0.177} & 0.267 & \multicolumn{1}{c|}{\textbf{0.176}} & \textbf{0.265} & \multicolumn{1}{c|}{0.178} & 0.268 \\
 & 720 & \multicolumn{1}{c|}{0.223} & 0.306 & \multicolumn{1}{c|}{0.217} & 0.301 & \multicolumn{1}{c|}{0.214} & 0.298 & \multicolumn{1}{c|}{\textbf{0.213}} & \textbf{0.298} & \multicolumn{1}{c|}{0.216} & 0.299 \\
 & Avg. & \multicolumn{1}{c|}{0.181} & 0.268 & \multicolumn{1}{c|}{0.177} & 0.265 & \multicolumn{1}{c|}{0.176} & 0.263 & \multicolumn{1}{c|}{\textbf{0.175}} & \textbf{0.262} & \multicolumn{1}{c|}{0.176} & 0.263 \\ \bottomrule
\end{tabular}
\end{table}
\section{Related Work}
\label{sec:related}

\noindent{\bf Time series forecasting and temporal models.} Time series forecasting presents unique challenges, especially in modeling long-term dependencies and complex temporal dynamics. Transformer-based architectures \citep{Vaswani17} have recently advanced the field by leveraging self-attention to capture global temporal relationships, outperforming traditional RNNs and CNNs \citep{Flunkert17, Lai18, Bai18, Cheng20} that often struggle with scalability and long-range modeling. Notable advancements include Informer \citep{Zhou21}, which introduces ProbSparse attention for efficient handling of long sequences, and Autoformer \citep{Wu21}, which decomposes time series into trend and seasonal components to improve interpretability and forecasting accuracy. PatchTST \citep{Nie23} restructures input sequences into patches for parallel processing in long-term prediction. Pyraformer \citep{Liu22} and TimesNet \citep{Wu23} explore hierarchical and multi-scale representations to further refine temporal modeling.

\para{Frequency-domain approaches.} Despite various advancements, time-domain models still fall short in capturing periodicity and spectral patterns inherent in many real-world time series. Frequency-domain approaches fill this void by leveraging Fourier and wavelet transforms to extract global and periodic features. Fredformer \citep{Piao24} employs frequency channel-wise attention to selectively focus on informative spectral components, while FreTS \citep{Yi23} models dependencies across frequency channels and temporal dimensions using MLPs. FITS \citep{Xu24} employs complex-valued layers for expressive frequency-domain transformations, and WPMixer \citep{Murad25} integrates wavelet decomposition with MLPs to capture both localized and long-term patterns. These models have demonstrated competitive or superior performance compared to purely temporal approaches.

\para{Hybrid temporal-frequency models.}  Recent studies have explored hybrid approaches that combine temporal and frequency-domain information. CDX-Net \citep{Li22b} integrates CNNs, RNNs, and attention mechanisms to extract and fuse multivariate features from both domains. FEDformer \citep{Zhou22} unifies trend-seasonal decomposition with Fourier analysis within a Transformer framework, enabling robust representation of multivariate time series. TimeMixer++ \citep{Wangsy24} generates multi-scale series via temporal downsampling, applies FFT-based periodic analysis, and employs attention mechanisms to learn robust representations of seasonal and trend components.

\para{Limitations of existing approaches.} Most frequency-domain and hybrid models, however, operate as black-box predictors, optimized primarily for accuracy with limited interpretability. Also, they rarely address practical challenges, such as concept drift and basis evolution, which undermine their robustness in dynamic environments where distributional shifts are common.

\section{Conclusion}
\label{sec:conclusion}

We use the discrete Fourier transform to unify the formulation of various types of time series. We propose \sysname, a new forecasting framework that works in the frequency domain through basis decomposition. This allows \sysname to capture richer, multi-dimensional features of temporal data.
A key strength of \sysname is its explicit and separate modeling of amplitude and phase for handling key challenges in time series forecasting, namely concept drift and basis evolution. \sysname combines rigorous mathematical ideas with practical components, including linear transformations, causal attention, and a composite loss function, so as to adapt dynamically and robustly to changing temporal patterns, even when data are noisy or sparse. Our experiments on diverse real-world datasets show that \sysname consistently achieves better accuracy, improved interpretability, and strong robustness. It performs especially well under severe concept drift and basis evolution, proving its effectiveness in dynamic scenarios.

For future work, we plan to strengthen the integration of mathematical theory with an interpretable model design for time series forecasting, move beyond trial-and-error methods, and develop more principled and transparent forecasting techniques, so as to tackle increasingly complex temporal data.

\end{sloppypar}


\appendix
\clearpage
\appendix

\section{Comparison Between FFT and Wavelet Transform}
\label{appendix: comparison}
Both the Fast Fourier Transform (FFT) and Wavelet Transform are fundamental tools for decomposing time series data from the time domain into the frequency domain, enabling lossless forward and inverse transformations. Despite this shared capability, they differ significantly in their underlying principles and suitability for general-purpose time series forecasting. In this section, we compare their characteristics and explain why FFT is generally more appropriate for modeling diverse time series data in forecasting tasks.

\begin{itemize}
    \item \textbf{Parameter-free nature of FFT:} FFT is a deterministic, parameter-free transformation that decomposes a signal into a fixed set of orthogonal frequency bases. The absence of hyperparameters eliminates the need for domain-specific knowledge or manual tuning during decomposition, making FFT highly suitable for modeling diverse time series in a general and automated manner.
    
    \item \textbf{Hyperparameter dependence of Wavelet Transform:} In contrast, the Wavelet Transform requires selecting specific wavelet functions and scale parameters, which act as hyperparameters critically influencing the decomposition results. Careful tuning of these parameters can enhance representation of time series exhibiting localized, transient, or non-stationary behaviors. However, this reliance on domain expertise and parameter selection limits its applicability in universal forecasting frameworks across heterogeneous datasets.
\end{itemize}

In summary, although both FFT and Wavelet Transform provide lossless time-frequency analysis, FFT’s parameter-free and universal nature makes it more suitable for general-purpose time series forecasting. Conversely, the Wavelet Transform is better suited to specialized scenarios where domain knowledge guides hyperparameter tuning to effectively capture complex localized features.

\section{Experimental Details}
\subsection{Implementation details}
We use a fixed look-back window (context length) of 96 time points to model the historical data and forecast future horizons $ T \in \{96, 192, 336, 720\} $. The patch length is varied between 8 and 48 to balance the trade-off between temporal resolution and computational efficiency.
Training is performed using mini-batch gradient descent with batch sizes ranging from 32 to 256. Larger batch sizes improve parallelism and enable more efficient utilization of GPU resources.
We adopt the ADAM optimizer for model optimization, tuning the learning rate over the set $\{1 \times 10^{-2}, 5 \times 10^{-3}, 2 \times 10^{-3}, 1 \times 10^{-3}, 5 \times 10^{-4}, 1 \times 10^{-4}\}$ to achieve stable and efficient convergence.
Early stopping is employed based on the validation loss: if the validation loss does not decrease for 8 consecutive epochs, which helps prevent overfitting and reduces unnecessary computation.
The model is implemented in PyTorch and trained on a single NVIDIA A100 GPU with 40GB of memory.
Evaluation metrics include Mean Squared Error (MSE) and Mean Absolute Error (MAE). We compare our results against the best-performing state-of-the-art models reported in the literature or reproduced from their published source codes.

\noindent\textbf{Summary of key hyperparameters:}
\begin{itemize}
    \item \textbf{Input length (look-back window):} 96
    \item \textbf{Forecast horizons:} \(T \in \{96, 192, 336, 720\}\)
    \item \textbf{Patch length:} 8 to 48
    \item \textbf{Batch size:} 32 to 256
    \item \textbf{Learning rates tested:} \(1 \times 10^{-2}, 5 \times 10^{-3}, 2 \times 10^{-3}, 1 \times 10^{-3}, 5 \times 10^{-4}, 1 \times 10^{-4}\)
    \item \textbf{Optimizer:} ADAM
    \item \textbf{Early stopping:} validation loss no improvement for 8 consecutive epochs
    \item \textbf{Hardware:} NVIDIA A100 GPU with 40GB memory
\end{itemize}

\subsection{Full results}
\label{appendix: full}
\para{Ablation results.} 
To complement the average results reported earlier, Table~\ref{table:module_ablation} presents the full forecasting performance for module ablation. \sysname\ achieves the best MSE in 27 out of 35 experiments and the best MAE in 30 out of 35. In contrast, \sysname\_{\text{advanced}} ranks second with only 6 and 3 best results on MSE and MAE, respectively. These results demonstrate the critical importance of modeling both data drift and basis evolution for improved forecasting accuracy.

\begin{table}[tbp]
\centering
\caption{Full results of module ablation}
\label{table:module_ablation}
\small 
\setlength{\tabcolsep}{2.5pt} 
\renewcommand{\arraystretch}{0.8} 
\begin{tabular}{c|c|cc|cc|cc}
\toprule
model & &\multicolumn{2}{c|}{\sysname} & \multicolumn{2}{c|}{\sysname\_adv.} & \multicolumn{2}{c}{\sysname\_base} \\ \midrule
dataset & T &\multicolumn{1}{c|}{MSE} & MAE & \multicolumn{1}{c|}{MSE} & MAE & \multicolumn{1}{c|}{MSE} & MAE \\ \midrule
 & 96 &\multicolumn{1}{c|}{\textbf{0.365}} & \textbf{0.390} & \multicolumn{1}{c|}{0.369} & 0.391 & \multicolumn{1}{c|}{0.373} & 0.391 \\
 & 192 &\multicolumn{1}{c|}{\textbf{0.420}} & \textbf{0.418} & \multicolumn{1}{c|}{0.425} & 0.425 & \multicolumn{1}{c|}{0.422} & 0.421 \\
ETTh1 & 336 &\multicolumn{1}{c|}{\textbf{0.458}} & \textbf{0.437} & \multicolumn{1}{c|}{0.459} & 0.438 & \multicolumn{1}{c|}{0.466} & 0.438 \\
 & 720 &\multicolumn{1}{c|}{\textbf{0.456}} & \textbf{0.454} & \multicolumn{1}{c|}{0.470} & 0.467 & \multicolumn{1}{c|}{0.474} & 0.459 \\
&Avg. & \multicolumn{1}{c|}{\textbf{0.425}} & \textbf{0.425} & \multicolumn{1}{c|}{0.431} & 0.430 & \multicolumn{1}{c|}{0.434} & 0.427 \\ \midrule
 & 96 &\multicolumn{1}{c|}{0.282} & 0.333 & \multicolumn{1}{c|}{\textbf{0.280}} & \textbf{0.331} & \multicolumn{1}{c|}{0.281} & 0.333 \\
 & 192 &\multicolumn{1}{c|}{0.362} & 0.383 & \multicolumn{1}{c|}{\textbf{0.359}} & \textbf{0.382} & \multicolumn{1}{c|}{0.360} & 0.384 \\
ETTh2 & 336 &\multicolumn{1}{c|}{0.403} & \textbf{0.419} & \multicolumn{1}{c|}{\textbf{0.398}} & 0.421 & \multicolumn{1}{c|}{0.400} & 0.420 \\
 & 720 &\multicolumn{1}{c|}{\textbf{0.408}} & \textbf{0.433} & \multicolumn{1}{c|}{0.412} & 0.433 & \multicolumn{1}{c|}{0.409} & 0.433 \\
& Avg. & \multicolumn{1}{c|}{0.364} & \textbf{0.392} & \multicolumn{1}{c|}{\textbf{0.362}} & 0.392 & \multicolumn{1}{c|}{0.363} & 0.393 \\ \midrule
 & 96 &\multicolumn{1}{c|}{\textbf{0.310}} & \textbf{0.344} & \multicolumn{1}{c|}{0.313} & 0.353 & \multicolumn{1}{c|}{0.317} & 0.352 \\
 & 192 & \multicolumn{1}{c|}{\textbf{0.356}} & \textbf{0.375} & \multicolumn{1}{c|}{0.358} & 0.379 & \multicolumn{1}{c|}{0.360} & 0.378 \\
ETTm1 & 336 & \multicolumn{1}{c|}{\textbf{0.385}} & 0.397 & \multicolumn{1}{c|}{0.386} & 0.399 & \multicolumn{1}{c|}{0.383} & \textbf{0.397} \\
 & 720 & \multicolumn{1}{c|}{0.448} & \textbf{0.431} & \multicolumn{1}{c|}{0.444} & 0.434 & \multicolumn{1}{c|}{\textbf{0.443}} & 0.433 \\
&Avg. & \multicolumn{1}{c|}{\textbf{0.375}} & \textbf{0.387} & \multicolumn{1}{c|}{0.375} & 0.391 & \multicolumn{1}{c|}{0.376} & 0.390 \\ \midrule
 & 96 &\multicolumn{1}{c|}{\textbf{0.170}} & \textbf{0.252} & \multicolumn{1}{c|}{0.172} & 0.253 & \multicolumn{1}{c|}{0.172} & 0.253 \\
 & 192 & \multicolumn{1}{c|}{\textbf{0.237}} & \textbf{0.297} & \multicolumn{1}{c|}{0.238} & 0.298 & \multicolumn{1}{c|}{0.237} & 0.297 \\
ETTm2 & 336 & \multicolumn{1}{c|}{0.299} & 0.338 & \multicolumn{1}{c|}{0.300} & \textbf{0.338} & \multicolumn{1}{c|}{\textbf{0.296}} & 0.334 \\
 & 720 & \multicolumn{1}{c|}{0.399} & 0.395 & \multicolumn{1}{c|}{\textbf{0.398}} & 0.397 & \multicolumn{1}{c|}{0.394} & \textbf{0.394} \\
&Avg. & \multicolumn{1}{c|}{0.276} & 0.321 & \multicolumn{1}{c|}{0.277} & 0.322 & \multicolumn{1}{c|}{\textbf{0.275}} & \textbf{0.320} \\ \midrule
 & 96 &\multicolumn{1}{c|}{\textbf{0.162}} & \textbf{0.204} & \multicolumn{1}{c|}{0.164} & 0.206 & \multicolumn{1}{c|}{0.165} & 0.208 \\
 & 192 & \multicolumn{1}{c|}{\textbf{0.207}} & \textbf{0.246} & \multicolumn{1}{c|}{0.209} & 0.247 & \multicolumn{1}{c|}{0.209} & 0.248 \\
Weather & 336& \multicolumn{1}{c|}{\textbf{0.263}} & \textbf{0.287} & \multicolumn{1}{c|}{0.266} & 0.293 & \multicolumn{1}{c|}{0.267} & 0.291 \\
 & 720& \multicolumn{1}{c|}{\textbf{0.340}} & \textbf{0.338} & \multicolumn{1}{c|}{0.342} & 0.341 & \multicolumn{1}{c|}{0.343} & 0.340 \\
&Avg. & \multicolumn{1}{c|}{\textbf{0.243}} & \textbf{0.269} & \multicolumn{1}{c|}{0.245} & 0.272 & \multicolumn{1}{c|}{0.246} & 0.272 \\ \midrule
 & 96 &\multicolumn{1}{c|}{\textbf{0.474}} & \textbf{0.272} & \multicolumn{1}{c|}{0.479} & 0.285 & \multicolumn{1}{c|}{0.493} & 0.304 \\
 & 192 & \multicolumn{1}{c|}{0.487} & \textbf{0.269} & \multicolumn{1}{c|}{\textbf{0.480}} & 0.286 & \multicolumn{1}{c|}{0.486} & 0.299 \\
Traffic & 336 & \multicolumn{1}{c|}{\textbf{0.484}} & \textbf{0.275} & \multicolumn{1}{c|}{0.490} & 0.281 & \multicolumn{1}{c|}{0.507} & 0.306 \\
 & 720 & \multicolumn{1}{c|}{\textbf{0.531}} & \textbf{0.295} & \multicolumn{1}{c|}{0.532} & 0.309 & \multicolumn{1}{c|}{0.536} & 0.324 \\
&Avg. & \multicolumn{1}{c|}{\textbf{0.494}} & \textbf{0.278} & \multicolumn{1}{c|}{0.495} & 0.290 & \multicolumn{1}{c|}{0.506} & 0.308 \\ \midrule
& 96 &\multicolumn{1}{c|}{\textbf{0.148}} & \textbf{0.236} & \multicolumn{1}{c|}{0.151} & 0.239 & \multicolumn{1}{c|}{0.162} & 0.249 \\
 & 192 &\multicolumn{1}{c|}{\textbf{0.161}} & \textbf{0.249} & \multicolumn{1}{c|}{0.163} & 0.250 & \multicolumn{1}{c|}{0.172} & 0.258 \\
Elc  & 336 & \multicolumn{1}{c|}{\textbf{0.176}} & \textbf{0.265} & \multicolumn{1}{c|}{0.179} & 0.267 & \multicolumn{1}{c|}{0.189} & 0.275 \\
 & 720 & \multicolumn{1}{c|}{\textbf{0.215}} & \textbf{0.299} & \multicolumn{1}{c|}{0.217} & 0.301 & \multicolumn{1}{c|}{0.232} & 0.310 \\
& Avg. & \multicolumn{1}{c|}{\textbf{0.175}} & \textbf{0.262} & \multicolumn{1}{c|}{0.178} & 0.264 & \multicolumn{1}{c|}{0.189} & 0.273 \\ \midrule
Best\_Count & &\multicolumn{1}{c|}{\textbf{27/35}} & \textbf{30/35} & \multicolumn{1}{c|}{6/35} & 3/35 & \multicolumn{1}{c|}{3/35} & 2/35 \\ \bottomrule
\end{tabular}
\end{table}

Additionally, Table~\ref{table:loss_ablation} presents the full results of loss function ablations across seven datasets. Our full model, \sysname, achieves the best MSE and MAE in 20 and 22 out of all experiments, respectively, outperforming all variants. The systematic performance degradation observed when removing each loss component confirms the essential contribution of every loss term to the overall forecasting accuracy. This ablation study validates the design of the composite loss in enhancing model effectiveness.

\begin{table}[!t]
\centering
\caption{Full results of Loss ablation}
\label{table:loss_ablation}
\small 
\setlength{\tabcolsep}{2pt} 
\renewcommand{\arraystretch}{1} 
\begin{tabular}{c|l|cc|cc|cc|cc}
\toprule
Model &  & \multicolumn{2}{c|}{\textbackslash{}sysname} & \multicolumn{2}{c|}{\textbackslash{}sysname\_enh.} & \multicolumn{2}{c|}{\textbackslash{}sysname\_adv.} & \multicolumn{2}{c}{\textbackslash{}sysname\_base} \\ \midrule
Data & \multicolumn{1}{c|}{T} & \multicolumn{1}{c|}{MSE} & MAE & \multicolumn{1}{c|}{MSE} & MAE & \multicolumn{1}{c|}{MSE} & MAE & \multicolumn{1}{c|}{MSE} & MAE \\ \midrule
 & 96 & \multicolumn{1}{c|}{\textbf{0.365}} & \textbf{0.390} & \multicolumn{1}{c|}{0.369} & 0.391 & \multicolumn{1}{c|}{0.385} & 0.401 & \multicolumn{1}{c|}{0.381} & 0.400 \\
 & 192 & \multicolumn{1}{c|}{\textbf{0.420}} & \textbf{0.418} & \multicolumn{1}{c|}{0.423} & 0.420 & \multicolumn{1}{c|}{0.432} & 0.434 & \multicolumn{1}{c|}{0.426} & 0.429 \\
ETTh1 & 336 & \multicolumn{1}{c|}{\textbf{0.458}} & \textbf{0.437} & \multicolumn{1}{c|}{0.462} & 0.438 & \multicolumn{1}{c|}{0.465} & 0.447 & \multicolumn{1}{c|}{0.458} & 0.441 \\
 & 720 & \multicolumn{1}{c|}{\textbf{0.456}} & \textbf{0.454} & \multicolumn{1}{c|}{0.461} & 0.459 & \multicolumn{1}{c|}{0.474} & 0.467 & \multicolumn{1}{c|}{0.468} & 0.464 \\
 & \multicolumn{1}{c|}{Avg.} & \multicolumn{1}{c|}{\textbf{0.424}} & \textbf{0.424} & \multicolumn{1}{c|}{0.428} & 0.427 & \multicolumn{1}{c|}{0.439} & 0.437 & \multicolumn{1}{c|}{0.433} & 0.433 \\ \midrule
 & 96 & \multicolumn{1}{c|}{0.282} & \textbf{0.333} & \multicolumn{1}{c|}{\textbf{0.281}} & 0.333 & \multicolumn{1}{c|}{0.300} & 0.348 & \multicolumn{1}{c|}{0.283} & 0.332 \\
 & 192 & \multicolumn{1}{c|}{0.362} & 0.383 & \multicolumn{1}{c|}{\textbf{0.360}} & \textbf{0.382} & \multicolumn{1}{c|}{0.382} & 0.397 & \multicolumn{1}{c|}{0.360} & 0.384 \\
ETTh2 & 336 & \multicolumn{1}{c|}{\textbf{0.403}} & 0.419 & \multicolumn{1}{c|}{0.403} & \textbf{0.418} & \multicolumn{1}{c|}{0.425} & 0.432 & \multicolumn{1}{c|}{0.406} & 0.425 \\
 & 720 & \multicolumn{1}{c|}{\textbf{0.408}} & \textbf{0.433} & \multicolumn{1}{c|}{0.411} & 0.433 & \multicolumn{1}{c|}{0.434} & 0.448 & \multicolumn{1}{c|}{0.420} & 0.438 \\
 & \multicolumn{1}{c|}{Avg.} & \multicolumn{1}{c|}{\textbf{0.363}} & 0.392 & \multicolumn{1}{c|}{0.363} & \textbf{0.391} & \multicolumn{1}{c|}{0.385} & 0.406 & \multicolumn{1}{c|}{0.367} & 0.394 \\ \midrule
 & 96 & \multicolumn{1}{c|}{\textbf{0.310}} & \textbf{0.344} & \multicolumn{1}{c|}{0.312} & 0.349 & \multicolumn{1}{c|}{0.325} & 0.368 & \multicolumn{1}{c|}{0.316} & 0.357 \\
 & 192 & \multicolumn{1}{c|}{\textbf{0.356}} & \textbf{0.375} & \multicolumn{1}{c|}{0.359} & 0.376 & \multicolumn{1}{c|}{0.369} & 0.390 & \multicolumn{1}{c|}{0.360} & 0.380 \\
ETTm1 & 336 & \multicolumn{1}{c|}{0.385} & 0.397 & \multicolumn{1}{c|}{\textbf{0.383}} & \textbf{0.395} & \multicolumn{1}{c|}{0.390} & 0.407 & \multicolumn{1}{c|}{0.389} & 0.403 \\
 & 720 & \multicolumn{1}{c|}{0.448} & 0.431 & \multicolumn{1}{c|}{\textbf{0.442}} & \textbf{0.430} & \multicolumn{1}{c|}{0.453} & 0.442 & \multicolumn{1}{c|}{0.448} & 0.441 \\
 & \multicolumn{1}{c|}{Avg.} & \multicolumn{1}{c|}{0.374} & \textbf{0.386} & \multicolumn{1}{c|}{\textbf{0.374}} & 0.387 & \multicolumn{1}{c|}{0.384} & 0.401 & \multicolumn{1}{c|}{0.378} & 0.395 \\ \midrule
 & 96 & \multicolumn{1}{c|}{\textbf{0.170}} & 0.252 & \multicolumn{1}{c|}{0.170} & \textbf{0.250} & \multicolumn{1}{c|}{0.192} & 0.283 & \multicolumn{1}{c|}{0.177} & 0.262 \\
 & 192 & \multicolumn{1}{c|}{\textbf{0.237}} & \textbf{0.297} & \multicolumn{1}{c|}{0.238} & 0.297 & \multicolumn{1}{c|}{0.254} & 0.317 & \multicolumn{1}{c|}{0.242} & 0.303 \\
ETTm2 & 336 & \multicolumn{1}{c|}{\textbf{0.299}} & 0.338 & \multicolumn{1}{c|}{0.300} & \textbf{0.337} & \multicolumn{1}{c|}{0.319} & 0.358 & \multicolumn{1}{c|}{0.305} & 0.344 \\
 & 720 & \multicolumn{1}{c|}{\textbf{0.399}} & 0.395 & \multicolumn{1}{c|}{0.399} & \textbf{0.394} & \multicolumn{1}{c|}{0.419} & 0.416 & \multicolumn{1}{c|}{0.404} & 0.401 \\
 & \multicolumn{1}{c|}{Avg.} & \multicolumn{1}{c|}{\textbf{0.276}} & 0.320 & \multicolumn{1}{c|}{0.277} & \textbf{0.319} & \multicolumn{1}{c|}{0.296} & 0.343 & \multicolumn{1}{c|}{0.282} & 0.327 \\ \midrule
 & 96 & \multicolumn{1}{c|}{0.162} & 0.204 & \multicolumn{1}{c|}{\textbf{0.160}} & \textbf{0.202} & \multicolumn{1}{c|}{0.163} & 0.207 & \multicolumn{1}{c|}{0.162} & 0.205 \\
 & 192 & \multicolumn{1}{c|}{0.207} & 0.246 & \multicolumn{1}{c|}{\textbf{0.206}} & \textbf{0.244} & \multicolumn{1}{c|}{0.208} & 0.249 & \multicolumn{1}{c|}{0.207} & 0.247 \\
Weather & 336 & \multicolumn{1}{c|}{\textbf{0.263}} & \textbf{0.287} & \multicolumn{1}{c|}{0.264} & 0.287 & \multicolumn{1}{c|}{0.264} & 0.288 & \multicolumn{1}{c|}{0.267} & 0.291 \\
 & 720 & \multicolumn{1}{c|}{\textbf{0.340}} & \textbf{0.338} & \multicolumn{1}{c|}{0.342} & 0.338 & \multicolumn{1}{c|}{0.344} & 0.34 & \multicolumn{1}{c|}{0.344} & 0.339 \\
 & \multicolumn{1}{c|}{Avg.} & \multicolumn{1}{c|}{\textbf{0.243}} & 0.268 & \multicolumn{1}{c|}{0.243} & \textbf{0.267} & \multicolumn{1}{c|}{0.2448} & 0.2710 & \multicolumn{1}{c|}{0.2450} & 0.2705 \\ \midrule
 & 96 & \multicolumn{1}{c|}{0.474} & \textbf{0.272} & \multicolumn{1}{c|}{\textbf{0.466}} & 0.284 & \multicolumn{1}{c|}{0.481} & 0.270 & \multicolumn{1}{c|}{0.481} & 0.278 \\
 & 192 & \multicolumn{1}{c|}{0.487} & \textbf{0.269} & \multicolumn{1}{c|}{\textbf{0.475}} & 0.287 & \multicolumn{1}{c|}{0.499} & 0.282 & \multicolumn{1}{c|}{0.492} & 0.283 \\
Traffic & 336 & \multicolumn{1}{c|}{0.484} & \textbf{0.275} & \multicolumn{1}{c|}{\textbf{0.482}} & 0.278 & \multicolumn{1}{c|}{0.509} & 0.289 & \multicolumn{1}{c|}{0.516} & 0.293 \\
 & 720 & \multicolumn{1}{c|}{0.531} & \textbf{0.295} & \multicolumn{1}{c|}{\textbf{0.527}} & 0.297 & \multicolumn{1}{c|}{0.547} & 0.307 & \multicolumn{1}{c|}{0.552} & 0.306 \\
 & \multicolumn{1}{c|}{Avg.} & \multicolumn{1}{c|}{0.494} & \textbf{0.277} & \multicolumn{1}{c|}{\textbf{0.487}} & 0.286 & \multicolumn{1}{c|}{0.509} & 0.287 & \multicolumn{1}{c|}{0.510} & 0.290 \\ \midrule
 & 96 & \multicolumn{1}{c|}{\textbf{0.148}} & \textbf{0.236} & \multicolumn{1}{c|}{0.149} & 0.238 & \multicolumn{1}{c|}{0.153} & 0.244 & \multicolumn{1}{c|}{0.154} & 0.243 \\
 & 192 & \multicolumn{1}{c|}{0.161} & 0.249 & \multicolumn{1}{c|}{\textbf{0.160}} & \textbf{0.248} & \multicolumn{1}{c|}{0.167} & 0.256 & \multicolumn{1}{c|}{0.166} & 0.256 \\
Elc & 336 & \multicolumn{1}{c|}{\textbf{0.176}} & \textbf{0.265} & \multicolumn{1}{c|}{0.176} & 0.266 & \multicolumn{1}{c|}{0.181} & 0.273 & \multicolumn{1}{c|}{0.182} & 0.274 \\
 & 720 & \multicolumn{1}{c|}{0.215} & \textbf{0.299} & \multicolumn{1}{c|}{\textbf{0.214}} & 0.299 & \multicolumn{1}{c|}{0.222} & 0.308 & \multicolumn{1}{c|}{0.225} & 0.309 \\
 & \multicolumn{1}{c|}{Avg.} & \multicolumn{1}{c|}{0.175} & \textbf{0.262} & \multicolumn{1}{c|}{\textbf{0.174}} & 0.262 & \multicolumn{1}{c|}{0.180} & 0.270 & \multicolumn{1}{c|}{0.181} & 0.270 \\ \midrule
Best &  & \multicolumn{1}{l|}{\textbf{20/35}} & \multicolumn{1}{l|}{\textbf{22/35}} & \multicolumn{1}{l|}{15/35} & 13/35 & \multicolumn{1}{c|}{0} & 0 & \multicolumn{1}{c|}{0} & 0 \\ \bottomrule
\end{tabular}
\end{table}

\para{Model sensitivity to look-back window length.}
We evaluate the impact of varying look-back window sizes \{96, 192, 288, 384, 512\} on forecasting performance using ETTh1 and Weather datasets, with other hyper-parameters fixed. 
As shown in Table~\ref{table:look_back_win_size} and Figure \ref{fig:look-back-win}, for ETTh1, increasing the window size from 96 to 384 consistently reduces the average MSE from 0.439 to 0.414 and MAE from 0.444 to 0.427, indicating improved accuracy due to more historical information. However, further increasing the window to 512 leads to a slight increase in MSE (0.439) and MAE (0.433), suggesting diminishing returns or potential noise introduction.
Similarly, on the Weather dataset, average MSE and MAE decrease steadily from 0.245 and 0.270 at window 96 to 0.224 and 0.261 at window 512, showing consistent gains with longer look-back windows. The improvements are less pronounced beyond window size 384, indicating performance saturation.

Overall, these results demonstrate that enlarging the look-back window generally enhances forecasting accuracy by leveraging more temporal context, but beyond a certain length, the benefits plateau or slightly decline, likely due to noise accumulation and redundancy in the input data.

\begin{table}[tbp]
\centering
\caption{Forecasting results of various look back window sizes}
\label{table:look_back_win_size}
\scriptsize 
\setlength{\tabcolsep}{4pt} 
\renewcommand{\arraystretch}{0.8} 
\begin{tabular}{c|cc|cc|cc|cc|cc}
\toprule
Window size & \multicolumn{2}{c|}{96} & \multicolumn{2}{c|}{192} & \multicolumn{2}{c|}{288} & \multicolumn{2}{c|}{384} & \multicolumn{2}{c}{512} \\ \midrule
Dataset & \multicolumn{1}{c|}{MSE} & MAE & \multicolumn{1}{c|}{MSE} & MAE & \multicolumn{1}{c|}{MSE} & MAE & \multicolumn{1}{c|}{MSE} & MAE & \multicolumn{1}{c|}{MSE} & MAE \\
\multicolumn{1}{l|}{} & \multicolumn{1}{c|}{0.376} & 0.396 & \multicolumn{1}{c|}{\textbf{0.373}} & 0.396 & \multicolumn{1}{c|}{0.375} & \textbf{0.395} & \multicolumn{1}{c|}{0.376} & 0.398 & \multicolumn{1}{c|}{0.382} & 0.405 \\
 & \multicolumn{1}{c|}{0.433} & 0.466 & \multicolumn{1}{c|}{0.425} & 0.422 & \multicolumn{1}{c|}{0.417} & 0.418 & \multicolumn{1}{c|}{\textbf{0.418}} & 0.423 & \multicolumn{1}{c|}{\textbf{0.406}} & 0.421 \\
ETTh1 & \multicolumn{1}{c|}{0.466} & 0.442 & \multicolumn{1}{c|}{0.449} & 0.433 & \multicolumn{1}{c|}{0.440} & 0.434 & \multicolumn{1}{c|}{0.429} & \textbf{0.431} & \multicolumn{1}{c|}{\textbf{0.427}} & 0.442 \\
 & \multicolumn{1}{c|}{0.480} & 0.470 & \multicolumn{1}{c|}{0.509} & 0.489 & \multicolumn{1}{c|}{0.540} & 0.510 & \multicolumn{1}{c|}{\textbf{0.433}} & \textbf{0.458} & \multicolumn{1}{c|}{0.440} & 0.463 \\
Avg. & \multicolumn{1}{c|}{0.439} & 0.444 & \multicolumn{1}{c|}{0.439} & 0.435 & \multicolumn{1}{c|}{0.443} & 0.439 & \multicolumn{1}{c|}{0.414} & \textbf{0.427} & \multicolumn{1}{c|}{\textbf{0.414}} & 0.433 \\ \midrule
\multicolumn{1}{l|}{} & \multicolumn{1}{c|}{0.162} & 0.204 & \multicolumn{1}{c|}{0.154} & 0.195 & \multicolumn{1}{c|}{0.147} & 0.194 & \multicolumn{1}{c|}{0.150} & 0.195 & \multicolumn{1}{c|}{\textbf{0.146}} & \textbf{0.192} \\
 & \multicolumn{1}{c|}{0.209} & 0.247 & \multicolumn{1}{c|}{0.198} & 0.237 & \multicolumn{1}{c|}{0.194} & 0.239 & \multicolumn{1}{c|}{0.195} & 0.241 & \multicolumn{1}{c|}{\textbf{0.190}} & \textbf{0.236} \\
Weather & \multicolumn{1}{c|}{0.267} & 0.289 & \multicolumn{1}{c|}{0.252} & 0.282 & \multicolumn{1}{c|}{0.251} & 0.281 & \multicolumn{1}{c|}{\textbf{0.246}} & \textbf{0.281} & \multicolumn{1}{c|}{0.248} & 0.283 \\
 & \multicolumn{1}{c|}{0.343} & 0.339 & \multicolumn{1}{c|}{0.330} & 0.335 & \multicolumn{1}{c|}{0.322} & 0.332 & \multicolumn{1}{c|}{0.317} & \textbf{0.331} & \multicolumn{1}{c|}{\textbf{0.312}} & 0.333 \\ 
Avg. & \multicolumn{1}{c|}{0.245} & 0.270 & \multicolumn{1}{c|}{0.233} & 0.262 & \multicolumn{1}{c|}{0.229} & 0.261 & \multicolumn{1}{c|}{0.227} & 0.262 & \multicolumn{1}{c|}{\textbf{0.224}} & \textbf{0.261} \\ \bottomrule
\end{tabular}
\end{table}

\begin{figure}[tbp]
    \centering
    \subfigure[ETTh1]{
        \includegraphics[width=0.225\textwidth]{./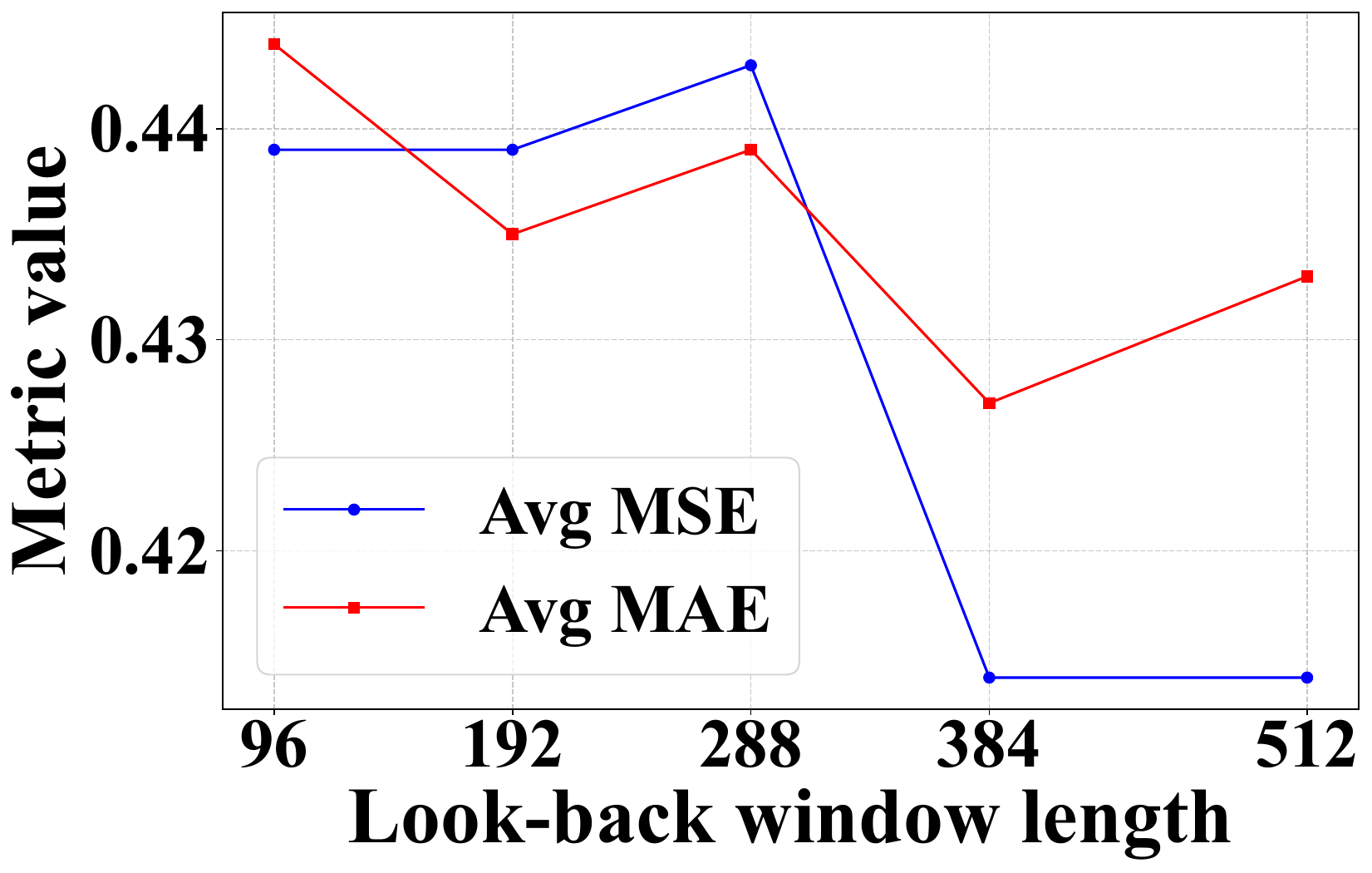}
    }
    \hfill
    \subfigure[Weather]{
        \includegraphics[width=0.225\textwidth]{./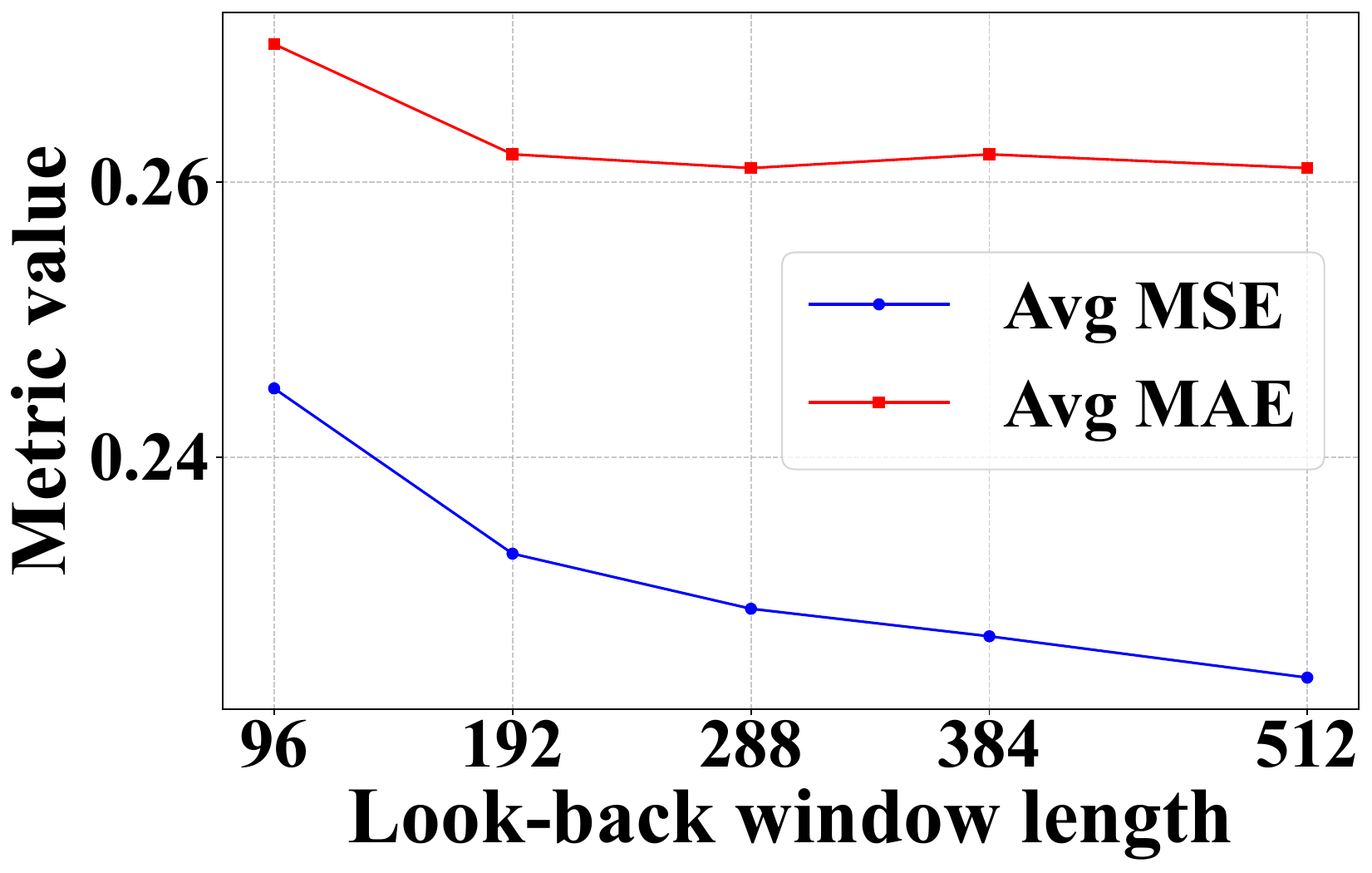}
    }
    \caption{Average forecasting results on ETTh1 and Weather datasets with various look back window lengths.}
    \label{fig:look-back-win}
\end{figure}

\end{document}